\documentclass[10pt,twocolumn,a4paper]{article}

\usepackage[a4paper,top=17mm,bottom=19mm,left=17mm,right=17mm,columnsep=6mm]{geometry}
\usepackage[T1]{fontenc}
\usepackage{amsmath,amsthm}
\usepackage{newtxtext,newtxmath}
\usepackage{microtype}
\usepackage{graphicx}
\usepackage{bm,mathtools}
\usepackage[ruled,vlined,linesnumbered]{algorithm2e}
\usepackage{booktabs}
\usepackage[numbers,sort&compress]{natbib}
\usepackage{xcolor}
\definecolor{linkblue}{RGB}{20,70,135}
\usepackage[colorlinks=true,linkcolor=linkblue,citecolor=linkblue,urlcolor=linkblue]{hyperref}
\usepackage{xr-hyper}
\usepackage{url}
\usepackage{enumitem}
\usepackage{titlesec}
\usepackage{caption}
\usepackage{subcaption}

\titleformat{\section}{\large\bfseries}{\thesection}{0.5em}{}
\titlespacing*{\section}{0pt}{5pt}{4pt}
\titleformat{\subsection}{\normalsize\bfseries}{\thesubsection}{0.5em}{}
\titlespacing*{\subsection}{0pt}{4pt}{2pt}
\newtheoremstyle{bmfaplain}{3pt}{3pt}{\normalfont}{}{\bfseries}{.}{0.5em}{}
\newtheoremstyle{bmfadefinition}{3pt}{3pt}{\normalfont}{}{\bfseries}{.}{0.5em}{}
\theoremstyle{bmfaplain}
\newtheorem{theorem}{Theorem}[section]
\newtheorem{lemma}[theorem]{Lemma}
\newtheorem{proposition}[theorem]{Proposition}
\newtheorem{corollary}[theorem]{Corollary}
\theoremstyle{bmfadefinition}

\SetAlgorithmName{Algorithm}{Algorithm}{List of Algorithms}
\SetKwInput{KwIn}{Input}
\SetKwInput{KwOut}{Output}

\title{\textbf{BMFA: Boundary-Minority Free-Energy Adaptive Screening}}
\author{%
\begin{tabular}{c@{\hspace{2.4em}}c}
Wenyan Xu\textsuperscript{1,*} & Alizer Wong\textsuperscript{2,3,*} \\
\texttt{3223004777@mail2.gdut.edu.cn} & \texttt{aliiiiezer@gmail.com}
\end{tabular}\\[0.35em]
\small \textsuperscript{1}Guangdong University of Technology \\
\small \textsuperscript{2}School of Computer Science, Peking University \\
\small \textsuperscript{3}ManXis \\
\small \textsuperscript{*}Equal contribution
}
\date{}

\begin{document}
\pagestyle{plain}
\maketitle
\vspace{-7mm}

\begin{abstract}
Vision Transformers process spatially redundant tokens efficiently only when coarse token summaries preserve the evidence required by exponential attention aggregation. We identify a boundary-minority underestimation failure in which a spatially small, high-response region contributes dominant Gibbs mass while remaining nearly invisible to a block mean. We formalize the failure through the discrepancy between normalized log-mean-exp free energy and mean summarization, prove that minority Gibbs mass can remain non-vanishing as its spatial support and mean contribution vanish, and characterize the limitations of finite-order moment corrections. Building on the resulting analysis, we introduce Boundary-Minority Free-Energy Adaptive Screening (BMFA), which constructs a hierarchical piecewise-constant approximation and recursively refines blocks according to a computable lower-bound increment of local free energy. Controlled synthetic tests, COCO and LVIS diagnostic probes, closed-loop DeiT-Tiny evaluations, and ImageNet-1K experiments establish a consistent evidence chain. BMFA reduces the mean synthetic underestimate from $2.582$ to $0.261$ at a $5.794\%$ leaf ratio, lowers the COCO image-edge mean gap from $2.254$ to $0.526$, and preserves $71.520\%$ ImageNet Top-1 accuracy at a $55.861\%$ leaf ratio. The current prototype evaluates selection quality after full QK computation; the reported leaf ratio therefore characterizes representation granularity rather than verified sparse-kernel speedup.
\end{abstract}

\section{Introduction}
\label{sec:introduction}

\begin{figure}[!t]
  \centering
  \includegraphics[width=\columnwidth]{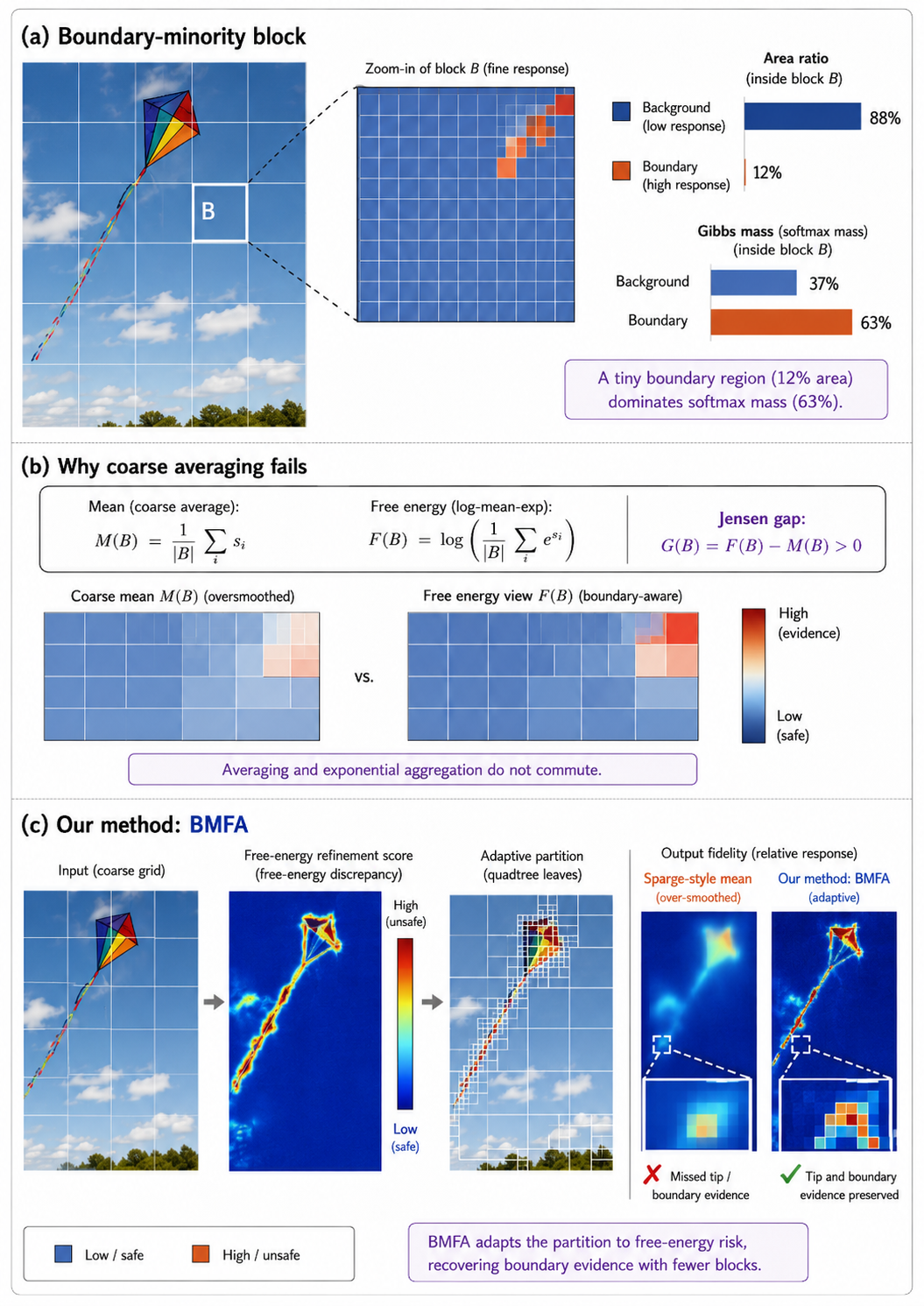}
  \caption{Motivation of BMFA. Top: a coarse block may contain a spatially small boundary subset whose area ratio is low but whose Gibbs mass is dominant. Middle: coarse averaging suppresses the boundary-minority evidence because averaging and exponential aggregation do not commute, producing a positive free-energy gap. Bottom: \textbf{Our method: BMFA} computes a free-energy refinement score, recursively refines only the high-risk regions, and preserves boundary evidence more faithfully than Sparge-style mean summarization.}
  \label{fig:motivation_single}
\end{figure}

Visual representation models have become foundational components of image classification, object detection, semantic segmentation, and vision-language understanding. Vision Transformer (ViT) models introduced by \citet{dosovitskiy2021vit}, hierarchical architectures such as Swin Transformer from \citet{liu2021swin}, dense prediction systems such as Segmenter from \citet{strudel2021segmenter}, and vision-language encoders such as CLIP from \citet{radford2021clip} demonstrate that token-based global interaction can model long-range dependencies within a unified architecture. Nevertheless, the evidence required for visual decisions is not distributed uniformly across an image. Large background regions and object interiors often contain substantial redundancy, whereas object boundaries, elongated structures, small parts, and small objects occupy limited spatial support while carrying decisive evidence for category, shape, or instance assignment. Efficient visual inference must therefore remove redundant computation without suppressing spatially rare but discriminative responses.

A ViT partitions an image into a sequence of patch tokens and repeatedly performs token interaction across Transformer layers. For a sequence of length $N$, standard self-attention constructs an $N\times N$ correlation matrix, yielding quadratic time and memory complexity in $N$; token-wise feed-forward networks and intermediate activations additionally grow linearly with sequence length. Higher image resolution, additional video frames, and multi-scale feature maps consequently make token count a primary constraint on throughput, memory, and energy consumption. Although the data-efficient training strategy of \citet{touvron2021deit} substantially reduced the training burden of visual Transformers, repeated processing of similar tokens remains costly during high-resolution inference. The resulting tension has motivated dynamic token reduction and selective attention computation under task-performance constraints.

Existing efficient visual Transformers follow several complementary directions. DynamicViT from \citet{rao2021dynamicvit} learns token importance and progressively removes redundant positions; EViT from \citet{liang2022evit} reorganizes non-critical tokens according to class-token attention; and A-ViT from \citet{yin2022avit} extends adaptive computation time to spatial tokens. Alternative approaches preserve information through compact representations rather than irreversible deletion. TokenLearner from \citet{ryoo2021tokenlearner} extracts a small set of adaptive tokens through spatial weighting, ToMe from \citet{bolya2023tome} progressively merges tokens according to feature similarity, and STViT from \citet{chang2023stvit} represents local clusters with semantic tokens. Recent studies employ richer structural criteria. Zero-TPrune from \citet{wang2024zerotprune} combines graph-based importance and similarity for zero-shot pruning; ToFu from \citet{kim2024tofu} unifies pruning and merging while mitigating representation shifts; MCTF from \citet{lee2024mctf} jointly considers similarity, information content, and accumulated fusion size; ALGM from \citet{norouzi2024algm} performs local-to-global merging; PiToMe from \citet{tran2024pitome} preserves informative tokens through an energy score; and SAD-TM from \citet{xie2026sadtm} incorporates visual saliency into training-free merging. Beyond token reduction, SparQ Attention from \citet{ribar2024sparq}, QUEST from \citet{tang2024quest}, and SpargeAttention from \citet{zhang2025spargeattn} reduce attention computation through query-dependent retrieval, block-level metadata, or online attention prediction, whereas Native Sparse Attention from \citet{yuan2025nsa} combines coarse compression, fine-grained selection, and hardware-aligned training. Despite substantial differences in learning strategy, compression target, and implementation, many efficient methods rely on importance scores, aggregated tokens, local representatives, or prescribed budgets to summarize collections of original responses. Such summaries are effective when local responses are nearly homogeneous, but a criterion aligned with softmax nonlinearity is generally unavailable for determining whether a coarse summary remains reliable under strong within-block heterogeneity.

The missing criterion becomes particularly consequential for blocks that cross object boundaries. A patch block may contain a large background region and a small foreground fragment, or may jointly cover an object interior, a contour, and an elongated appendage. When most positions have low logits and a small boundary subset has high logits, a block mean or a representative dominated by the majority pattern can classify the entire region as low response. Softmax, however, exponentially weights logits; a spatially small subset can consequently contribute most of the block-level attention mass. We refer to the suppression of minority high-response evidence by majority low-response regions under coarse aggregation as \emph{boundary-minority underestimation}. Boundaries provide a common spatial manifestation, but the underlying failure depends on strong within-block response heterogeneity rather than explicit edge labels. The central question is therefore not whether a region can be compressed in general, but whether a local statistic consistent with exponential aggregation can reveal when a coarse summary fails.

We analyze the failure directly from attention normalization. For concise notation in the introduction, the temperature is set to $\tau=1$; Section~\ref{sec:preliminaries} provides the general-temperature formulation. For a local block $B$ with scores $\{s_i\}_{i\in B}$, define the normalized log-partition value
\begin{equation}
F(B)=\log\!\left(\frac{1}{|B|}\sum_{i\in B}e^{s_i}\right),
\label{eq:free_energy}
\end{equation}
and the mean-based coarse summary
\begin{equation}
M(B)=\frac{1}{|B|}\sum_{i\in B}s_i.
\label{eq:block_mean}
\end{equation}
Convexity of the exponential function and Jensen's inequality yield
\begin{equation}
G(B)=F(B)-M(B)\geq 0,
\label{eq:jensen_gap}
\end{equation}
where $G(B)$ denotes the local free-energy gap. Equation~\eqref{eq:jensen_gap} shows that mean summarization is not an unbiased substitute for softmax-related mass; the approximation systematically underestimates exponential aggregation, and the magnitude depends on the internal score distribution rather than the mean alone.

To isolate the minority mechanism, consider a binary local model in which a majority fraction $1-\alpha$ has score $0$ and a minority fraction $\alpha$ has logit advantage $\Delta>0$. The normalized minority mass is
\begin{equation}
\rho(\alpha,\Delta)=
\frac{\alpha e^{\Delta}}{(1-\alpha)+\alpha e^{\Delta}}.
\label{eq:minority_mass}
\end{equation}
Let $\Delta_{\alpha}=\log[(1-\alpha)/\alpha]+c$, where $c$ is independent of $\alpha$. The minority mass becomes
\begin{equation}
\rho(\alpha,\Delta_{\alpha})=
\frac{e^c}{1+e^c},
\label{eq:nonvanishing_mass}
\end{equation}
which remains non-vanishing as $\alpha\rightarrow0$. By contrast, the mean contribution satisfies
\begin{equation}
\lim_{\alpha\rightarrow0}\alpha\Delta_{\alpha}=0.
\label{eq:mean_blindness}
\end{equation}
The minority support can therefore vanish while retaining a fixed fraction of attention mass, even though its contribution to the block mean disappears. Under the same construction, $G(B)$ converges to the non-zero constant $\log(1+e^c)$. Boundary-minority underestimation is consequently a structural outcome of the non-commutativity between averaging and exponential normalization rather than an incidental effect of noise. Section~\ref{sec:theoretical_motivation} formalizes the limiting argument and connects the gap to attention-distribution fidelity.

A variance correction can partially reduce mean underestimation, but cannot provide a uniformly reliable minority-response criterion. Let $x=s-M(B)$. Equation~\eqref{eq:free_energy} can be written as $F(B)=M(B)+\log\mathbb{E}[e^x]$, whose cumulant expansion contains the variance together with skewness-, kurtosis-, and higher-order terms. The approximation $M(B)+\operatorname{Var}(s)/2$ is locally appropriate for weak, nearly symmetric fluctuations; high-order cumulants remain material for skewed or multimodal distributions formed by rare high responses. Fixed-depth multi-scale partitioning reduces local mixing but expends resolution in homogeneous regions and can remain insufficient in complex regions. Complete preservation of all candidate edge positions provides a high-recall upper bound but requires external spatial priors or near-original granularity. Effective screening must therefore align the error measure with exponential normalization and allocate spatial resolution according to local approximation error.

Motivated by the analysis, we introduce Boundary-Minority Free-Energy Adaptive Screening (BMFA). BMFA uses the local free-energy gap as the theoretical target and a hierarchical lower-bound increment as a computable refinement signal. A candidate block is summarized by its mean, while a finer estimate is obtained from the weighted log-sum-exp of child-block means. If the discrepancy does not exceed a threshold $\varepsilon$, the block remains a coarse leaf; if the discrepancy exceeds $\varepsilon$ and the maximum depth has not been reached, BMFA recursively subdivides the block. The resulting sample- and location-adaptive hierarchy preserves coarse representations over homogeneous backgrounds and assigns finer resolution to boundaries, small parts, and unresolved high-response minorities. Unlike criteria based solely on mean response, prescribed depth, or fixed retention ratio, the refinement decision follows the nonlinear approximation error characterized by Eq.~\eqref{eq:jensen_gap}. Section~\ref{sec:methodology} defines the lower-bound refinement score, establishes its limitations, and presents the recursive construction.

Figure~\ref{fig:motivation_single} summarizes the motivating failure and the corresponding BMFA remedy on a real image. The high-response subsets cover only $12.5\%$ and $10.5\%$ of the corresponding local blocks, yet contribute $97.6\%$ and $97.1\%$ of the within-block Gibbs mass and induce free-energy gaps of $2.923$ and $2.854$. Small spatial support therefore does not imply negligible exponential mass, and the local mean does not reflect the actual contribution of minority responses to softmax normalization. Instance masks and boundaries are used only to construct the diagnostic image-edge probe and to visualize the failure; neither quantity is provided to BMFA during inference.

We evaluate BMFA through an evidence chain spanning mechanism diagnosis and model-level behavior. In controlled synthetic tests, mean-based screening produces an average underestimate of $2.582$, whereas BMFA reduces the value to $0.261$ at a relative leaf ratio of $5.794\%$. On $36{,}149$ COCO instances from \citet{lin2014coco}, BMFA reduces the image-edge mean underestimate from $2.254$ to $0.526$ at a $30.810\%$ leaf ratio. On $20{,}000$ LVIS instances from \citet{gupta2019lvis}, the corresponding reduction is from $2.277$ to $0.596$, indicating that the failure is not dataset-specific. Closed-loop DeiT-Tiny experiments show that BMFA with $\varepsilon=0.8$ attains $75.98\%$ Top-1 agreement with the full model at a granularity comparable to random $25\%$ retention, which reaches only $61.54\%$ agreement; the corresponding KL divergences are $0.149$ and $0.406$. On the ImageNet-1K validation set introduced by \citet{deng2009imagenet}, BMFA with $\varepsilon=0.5$ reaches $71.520\%$ Top-1 accuracy at a $55.861\%$ leaf ratio, only $0.564$ percentage points below the full DeiT-Tiny accuracy of $72.084\%$ and above the $69.148\%$ accuracy of a similarly sized random control. The results demonstrate that fidelity depends not only on retained granularity but also on locating minority responses concealed by coarse summaries.

Our contributions are threefold. First, we formalize boundary-minority underestimation through the non-commutativity of log-mean-exp aggregation and mean summarization. The resulting analysis proves that a minority region can retain non-vanishing softmax mass as both its area and mean contribution vanish, thereby identifying a structural failure condition for coarse token and attention summaries. Second, we introduce BMFA, which uses a local Jensen/free-energy objective and a hierarchical lower-bound increment to recursively allocate spatial resolution to blocks containing unresolved minority responses. Third, we establish a connected empirical validation across synthetic data, COCO, LVIS, closed-loop attention replacement, and ImageNet-1K. The current evaluation isolates screening quality and output fidelity: leaf ratio denotes relative representation granularity and is not equated with FLOPs or throughput improvements that have not been verified by a sparse kernel.

\section{Related Work}
\label{sec:related_work}

\subsection{Visual Token Pruning, Merging, and Adaptive Compression}

Visual token reduction shortens Transformer sequences through pruning, merging, or learned compact representations. Early methods remove redundant tokens using dynamic importance prediction, class-token attention, or adaptive computation time \citep{rao2021dynamicvit,liang2022evit,yin2022avit}, whereas learned extraction and similarity-based methods compress multiple tokens into a smaller set of representatives \citep{ryoo2021tokenlearner,bolya2023tome,chang2023stvit}. Recent work introduces richer structural criteria. \citet{wang2024zerotprune} model attention relationships as a graph and combine weighted PageRank importance with similarity for zero-shot pruning. \citet{kim2024tofu} select between pruning and merging according to local model responses to input interpolation and use MLERP to mitigate norm distortion caused by mean fusion. \citet{lee2024mctf} jointly consider token similarity, information content, and accumulated fusion size, while one-step-ahead attention estimates importance in the subsequent layer.

Spatial structure and information preservation motivate additional designs. \citet{norouzi2024algm} merge similar tokens within shallow local windows before performing global merging at intermediate layers, thereby accommodating the spatial-detail requirements of semantic segmentation. \citet{tran2024pitome} use an energy score to identify large clusters of similar tokens as preferred merging candidates while preserving isolated informative tokens, and analyze information retention from a spectral perspective. \citet{xie2026sadtm} combine attention-space semantic relevance with latent patch saliency and defer merging according to early-layer class-attention changes. These methods primarily determine which tokens should be removed, paired, or fused. BMFA addresses a complementary question: once multiple original responses have been represented by a local coarse summary, does the summary still approximate the exponential mass required by softmax? The free-energy gap in Eq.~\eqref{eq:jensen_gap} diagnoses distortion in mean-based local approximation, while the hierarchical construction in Section~\ref{sec:methodology} restricts refinement to blocks containing unresolved minority responses.

\subsection{Sparse Attention and Hierarchical Coarse-to-Fine Screening}

A separate line of work reduces the query-key interactions evaluated within the attention matrix. \citet{ribar2024sparq} use query-dependent selective history retrieval to reduce KV-cache bandwidth during long-context inference. \citet{tang2024quest} maintain per-dimension extrema for each KV-cache page and combine the metadata with the current query to estimate page criticality, loading only high-ranking candidates. The resulting design exemplifies coarse-to-fine screening through inexpensive block metadata. \citet{zhang2025spargeattn} employ a two-stage online filter: a fast attention-map prediction skips selected matrix multiplications, followed by a softmax-aware filter that removes additional low-contribution computation.

Native Sparse Attention from \citet{yuan2025nsa} incorporates dynamic hierarchical sparsity into end-to-end training and combines coarse-grained token compression with fine-grained selection in a hardware-aligned framework. Prior methods establish the value of block screening, hierarchical representations, and dynamic budgets for efficient attention, but candidate regions are generally determined by predicted importance, extrema-based bounds, top-$k$ budgets, or learned selectors. BMFA instead characterizes a theoretical blind spot of coarse screening itself. When majority low responses and minority high responses coexist within a block, the mean can become uninformative while log-mean-exp mass remains substantial. The local Jensen/free-energy analysis therefore provides a lower-bound refinement signal for approximation failure rather than a generic saliency score. The current prototype evaluates the effect of the signal on spatial selection and model-output fidelity; it does not interpret leaf ratio as sparse-kernel speedup.

\section{Preliminaries}
\label{sec:preliminaries}

\subsection{Attention Scores, Spatial Indices, and Assumptions}
\label{subsec:attention_setting}

Consider a fixed Transformer layer, attention head, and query token. Let $\Omega\subset\mathbb{Z}^{2}$ denote the finite index set of spatial key tokens, with cardinality $N=|\Omega|$; non-spatial tokens, including a classification token, are excluded from the subsequent spatial partition. The query, key, and value vectors are denoted by $\bm q\in\mathbb{R}^{d_h}$, $\bm k_i\in\mathbb{R}^{d_h}$, and $\bm v_i\in\mathbb{R}^{d_v}$, respectively, for $i\in\Omega$. Given a temperature parameter $\tau>0$, the scaled dot-product score, full attention probability, and attention output are
\begin{align}
 s_i&=\frac{\bm q^{\top}\bm k_i}{\sqrt{d_h}},
 &p_i&=\frac{\exp(s_i/\tau)}{\sum_{j\in\Omega}\exp(s_j/\tau)},
 &\bm o&=\sum_{i\in\Omega}p_i\bm v_i.
\label{eq:attention_definitions}
\end{align}
Standard scaled dot-product attention corresponds to $\tau=1$. Retaining an explicit temperature separates the limiting behaviors of smooth and sharply concentrated attention. Equation~\eqref{eq:attention_definitions} follows the standard Transformer formulation of \citet{vaswani2017attention}.

The analysis requires only the following assumptions. First, $\Omega$ is finite and every $s_i$ is a finite real number, so all exponential and logarithmic expressions are well defined. Second, a spatial block $B\subseteq\Omega$ is a non-empty rectangular index set with cardinality $n_B=|B|$. Third, only the attention-output perturbation bound assumes a constant $V_{\max}<\infty$ such that $\|\bm v_i\|_2\leq V_{\max}$. Except for the explicitly specified two-point distribution in the binary minority model, no Gaussianity, symmetry, independence, or identical-distribution assumption is imposed on within-block scores.

To unify finite blocks and idealized probability models, let $U_B$ be an index sampled uniformly from $B$ and define $S_B=s_{U_B}$. The arithmetic and exponential averages over a finite block equal $\mathbb E[S_B]$ and $\mathbb E[\exp(S_B/\tau)]$, respectively. Results stated for random variables therefore apply directly to the empirical score distribution of a finite block.

\subsection{Local Free Energy, Mean Summarization, and Conditional Gibbs Distributions}
\label{subsec:local_free_energy}

For any non-empty block $B$, define the unnormalized block Gibbs mass, normalized free energy, mean summary, and exact free-energy gap as
\begin{align}
 Z_\tau(B)&=\sum_{i\in B}\exp(s_i/\tau),
 &F_\tau(B)&=\tau\log\!\left(\frac{Z_\tau(B)}{n_B}\right),\notag\\
 M(B)&=\frac{1}{n_B}\sum_{i\in B}s_i,
 &G_\tau(B)&=F_\tau(B)-M(B).
\label{eq:block_quantities}
\end{align}
The factor $n_B^{-1}$ removes the trivial logarithmic offset associated with block size, ensuring that $F_\tau(B)$ and $M(B)$ coincide for constant-score blocks. The conditional Gibbs distribution and uniform distribution on $B$ are
\begin{equation}
 p_i^B=\frac{\exp(s_i/\tau)}{Z_\tau(B)},
 \qquad
 u_i^B=\frac{1}{n_B},
 \qquad i\in B.
\label{eq:block_distributions}
\end{equation}
Throughout the paper, the term \emph{free-energy gap} refers exclusively to $G_\tau(B)$ in Eq.~\eqref{eq:block_quantities}; the term is not used as a generic label for prediction discrepancies or arbitrary numerical differences.

\subsection{Hierarchical Partitions and Piecewise-Constant Approximation}
\label{subsec:hierarchical_partition}

A spatial partition $\mathcal P(B)=\{C_1,\ldots,C_m\}$ consists of pairwise disjoint non-empty sub-blocks whose union equals $B$. Let $w_r=n_{C_r}/n_B$ denote the relative weight of $C_r$. A quadtree recursively divides a splittable block into at most four rectangular children; the classical data structure follows the formulation of \citet{finkel1974quadtree}. For a partition $\mathcal P(B)$, define the hierarchical free-energy proxy induced by child-block means as
\begin{equation}
 \Phi_\tau(B;\mathcal P)
 =\tau\log\!\left[
 \sum_{C\in\mathcal P(B)}
 \frac{n_C}{n_B}\exp\!\left(\frac{M(C)}{\tau}\right)
 \right].
\label{eq:partition_proxy}
\end{equation}
When $\mathcal P(B)=\{B\}$, the proxy equals $M(B)$. When every partition element contains a single token, the proxy equals $F_\tau(B)$.

Let $\mathcal L(\mathcal T)$ denote the leaf set of a quadtree $\mathcal T$. For every $i\in L$, with $L\in\mathcal L(\mathcal T)$, define the piecewise-constant score approximation $\widetilde s_i=M(L)$. The corresponding globally normalized free energy and attention distribution are
\begin{align}
 \widetilde F_\tau(\mathcal T)
 &=\tau\log\!\left[
 \frac{1}{N}\sum_{L\in\mathcal L(\mathcal T)}
 n_L\exp\!\left(\frac{M(L)}{\tau}\right)
 \right],\notag\\
 q_i^{\mathcal T}
 &=\frac{\exp(\widetilde s_i/\tau)}
 {\sum_{j\in\Omega}\exp(\widetilde s_j/\tau)}.
\label{eq:tree_approximation}
\end{align}
The leaf ratio is defined as $\lambda(\mathcal T)=|\mathcal L(\mathcal T)|/N$. The ratio measures representation granularity relative to the single-token grid. Without a sparse kernel and end-to-end FLOPs accounting, $\lambda(\mathcal T)$ is not equivalent to realized acceleration.

\subsection{Ideal Objective for Free-Energy-Constrained Coarsening}
\label{subsec:safe_coarsening}

The ideal safe-coarsening problem minimizes the number of leaves while constraining global free-energy error:
\begin{equation}
 \min_{\mathcal T\in\mathfrak T_D}|\mathcal L(\mathcal T)|
 \quad\text{s.t.}\quad
 F_\tau(\Omega)-\widetilde F_\tau(\mathcal T)\leq\varepsilon,
\label{eq:ideal_coarsening}
\end{equation}
where $\mathfrak T_D$ is the set of valid spatial trees with maximum depth at most $D$, and $\varepsilon\geq0$ is the admissible error. Equation~\eqref{eq:ideal_coarsening} defines a combinatorial optimization problem. BMFA does not solve the globally optimal tree directly; instead, it recursively constructs an interpretable approximation using local hierarchical lower bounds.

The theory is first established independently for a fixed query-head pair. Sharing a tree across queries or heads requires an explicit aggregation operator, such as the maximum or a high quantile of query-head refinement scores. Different aggregation rules induce different levels of conservativeness; no unspecified sharing strategy is treated as an implicit assumption in the subsequent results.

\section{Theoretical Analysis of Boundary-Minority Underestimation}
\label{sec:theoretical_motivation}

\subsection{Systematic Free-Energy Underestimation by Mean Summarization}
\label{subsec:systematic_underestimation}

\begin{theorem}[Systematic underestimation by the mean]
\label{thm:jensen_underestimation}
For every non-empty block $B$ and every $\tau>0$, the exact free-energy gap satisfies $G_\tau(B)\geq0$. Equality holds if and only if all scores in $\{s_i:i\in B\}$ are equal.
\end{theorem}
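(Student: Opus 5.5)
The plan is to apply Jensen's inequality to the strictly convex function $\phi(x)=e^{x/\tau}$ under the uniform distribution on $B$. Using the random-variable formulation from Section~\ref{subsec:attention_setting}, let $S_B=s_{U_B}$ with $U_B$ uniform on $B$. Then $Z_\tau(B)/n_B=\mathbb E[\exp(S_B/\tau)]$ and $M(B)=\mathbb E[S_B]$. Jensen's inequality gives
\begin{equation*}
\mathbb E\!\left[e^{S_B/\tau}\right]\ \geq\ e^{\mathbb E[S_B]/\tau}=e^{M(B)/\tau}.
\end{equation*}
Since $\tau>0$, the map $y\mapsto\tau\log y$ is strictly increasing on $(0,\infty)$. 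Applying it to both sides yields $F_\tau(B)\geq M(B)$, that is, $G_\tau(B)\geq 0$. All quantities are finite by the first standing assumption, so nothing is ill defined.

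For the equality case, I would avoid quoting the strict form of Jensen's inequality and prove it directly with the tangent line. Set $m=M(B)$. Strict convexity of $\phi$ gives
\begin{equation*}
e^{x/\tau}\ \geq\ e^{m/\tau}\left(1+\frac{x-m}{\tau}\right)\quad\text{for all }x\in\mathbb R,
\end{equation*}
with equality if and only if $x=m$. Averaging this over $i\in B$ makes the linear term vanish, because $\sum_{i\in B}(s_i-m)=0$. This gives $Z_\tau(B)/n_B\geq e^{m/\tau}$, which is the inequality again. Equality in the average holds if and only if equality holds for every term. The difference of the two sides is a sum of nonnegative terms, so it is zero only if each term is zero, which forces $s_i=m$ for all $i\in B$. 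Conversely, if all scores equal a constant $c$, then $F_\tau(B)=\tau\log e^{c/\tau}=c=M(B)$. Injectivity of $\tau\log$ carries the equality condition back to $G_\tau(B)=0$.

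The only step requiring care is this equality characterization. In particular, the tangent-line inequality must be strict away from the tangent point. This follows from $\phi''(x)=\tau^{-2}e^{x/\tau}>0$, or from the elementary fact that $e^{y}\geq 1+y$ with equality only at $y=0$, applied with $y=(x-m)/\tau$. I would use the latter, since it makes the argument self-contained and independent of the block size.
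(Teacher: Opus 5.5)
Your proof is correct and follows the paper's route: Jensen's inequality for the strictly convex map $x\mapsto e^{x/\tau}$ under the uniform distribution on $B$, followed by the strictly increasing map $y\mapsto\tau\log y$. The only difference is that you prove the equality case directly from the tangent-line bound $e^{y}\geq 1+y$ (with equality only at $y=0$), whereas the paper cites the equality condition of strict Jensen; this makes the same argument self-contained.
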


\begin{proof}
The function $x\mapsto\exp(x/\tau)$ is strictly convex on $\mathbb R$. Jensen's inequality \citep{boyd2004convex} gives
\begin{equation}
 \frac{1}{n_B}\sum_{i\in B}\exp(s_i/\tau)
 \geq
 \exp\!\left(\frac{1}{n_B}\sum_{i\in B}\frac{s_i}{\tau}\right)
 =\exp\!\left(\frac{M(B)}{\tau}\right).
\label{eq:jensen_step}
\end{equation}
Taking natural logarithms and multiplying both sides by $\tau>0$ yields $F_\tau(B)\geq M(B)$, equivalently $G_\tau(B)\geq0$. Equality in Jensen's inequality for a strictly convex function requires identical inputs, so equality holds exactly when $s_i=M(B)$ for every $i\in B$.
\end{proof}

Theorem~\ref{thm:jensen_underestimation} establishes a deterministic direction for the approximation error: local underestimation does not arise merely from sampling noise or training instability.

\begin{corollary}[Relative-entropy representation of the free-energy gap]
\label{cor:gap_kl_uniform}
Let $u^B$ and $p^B$ be the uniform and conditional Gibbs distributions in Eq.~\eqref{eq:block_distributions}. Then
\begin{equation}
 G_\tau(B)=\tau D_{\mathrm{KL}}(u^B\|p^B).
\label{eq:gap_reverse_kl}
\end{equation}
\end{corollary}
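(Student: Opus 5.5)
The plan is to prove the identity by expanding the relative entropy directly from its definition and recognizing the three block quantities of Eq.~\eqref{eq:block_quantities}. No inequality is needed. The corollary is an exact algebraic identity; Theorem~\ref{thm:jensen_underestimation} then reappears as the nonnegativity of relative entropy.

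First I would check that the divergence is finite. Every $s_i$ is a finite real number by the standing assumptions, so $p_i^B=\exp(s_i/\tau)/Z_\tau(B)>0$ for all $i\in B$. Hence $u^B$ is absolutely continuous with respect to $p^B$ and
\begin{equation*}
 D_{\mathrm{KL}}(u^B\|p^B)=\sum_{i\in B}\frac{1}{n_B}\log\frac{1/n_B}{p_i^B}
\end{equation*}
is a finite sum.

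Next I would substitute $\log p_i^B=s_i/\tau-\log Z_\tau(B)$ into this sum. This gives
\begin{equation*}
 D_{\mathrm{KL}}(u^B\|p^B)=-\log n_B+\log Z_\tau(B)-\frac{1}{\tau}\cdot\frac{1}{n_B}\sum_{i\in B}s_i .
\end{equation*}
The first two terms combine into $\log\bigl(Z_\tau(B)/n_B\bigr)=F_\tau(B)/\tau$, and the last term is $M(B)/\tau$. Multiplying through by $\tau$ yields $\tau D_{\mathrm{KL}}(u^B\|p^B)=F_\tau(B)-M(B)=G_\tau(B)$, which is Eq.~\eqref{eq:gap_reverse_kl}.

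There is no real obstacle. The only points needing care are the direction of the divergence (uniform measured against Gibbs, not the reverse) and the bookkeeping of the $\log n_B$ offset, which cancels exactly because of the normalization built into $F_\tau$.

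As a consistency check, Gibbs' inequality $D_{\mathrm{KL}}\ge 0$, with equality if and only if $p^B=u^B$, reproduces Theorem~\ref{thm:jensen_underestimation}. The equality case $p^B=u^B$ holds exactly when all $s_i$ are equal.
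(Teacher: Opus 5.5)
Your proposal is correct and follows the paper's proof essentially line for line: you expand $D_{\mathrm{KL}}(u^B\|p^B)$ term by term, substitute $\log p_i^B=s_i/\tau-\log Z_\tau(B)$, and recognize $\log(Z_\tau(B)/n_B)=F_\tau(B)/\tau$ and the averaged score term as $M(B)/\tau$. Your finiteness check and the recovery of Theorem~\ref{thm:jensen_underestimation} via Gibbs' inequality are sound additions, though the identity itself does not need them.
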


\begin{proof}
Using $p_i^B=\exp(s_i/\tau)/Z_\tau(B)$ and expanding the relative entropy term by term,
\begin{align}
 D_{\mathrm{KL}}(u^B\|p^B)
 &=\sum_{i\in B}\frac{1}{n_B}
 \log\frac{1/n_B}{p_i^B}\notag\\
 &=\log Z_\tau(B)-\log n_B-
 \frac{1}{n_B}\sum_{i\in B}\frac{s_i}{\tau}\notag\\
 &=\frac{F_\tau(B)-M(B)}{\tau}.
\label{eq:gap_kl_derivation}
\end{align}
Multiplication by $\tau$ proves Eq.~\eqref{eq:gap_reverse_kl}.
\end{proof}

Equation~\eqref{eq:gap_reverse_kl} shows that $G_\tau(B)$ measures exactly how far the within-block Gibbs distribution departs from uniformity; the quantity is not a generic dispersion measure disconnected from attention normalization.

\begin{proposition}[Translation invariance and temperature monotonicity]
\label{prop:temperature_properties}
For every $c\in\mathbb R$, replacing all block scores by $s_i+c$ leaves $G_\tau(B)$ unchanged. If the block scores are not constant, then $G_\tau(B)$ is strictly decreasing in $\tau$ and satisfies
\begin{equation}
 \lim_{\tau\to\infty}G_\tau(B)=0,
 \qquad
 \lim_{\tau\to0^+}G_\tau(B)=\max_{i\in B}s_i-M(B).
\label{eq:temperature_limits}
\end{equation}
\end{proposition}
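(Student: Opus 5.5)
Translation invariance is a direct computation. Replacing $s_i$ by $s_i+c$ multiplies $Z_\tau(B)$ by $e^{c/\tau}$, so $F_\tau(B)$ increases by exactly $c$. The mean $M(B)$ also increases by $c$, hence $G_\tau(B)$ is unchanged. Alternatively, the conditional Gibbs distribution $p^B$ is invariant under the shift, so Corollary~\ref{cor:gap_kl_uniform} gives the same conclusion.

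For monotonicity I substitute $\beta=1/\tau$ and write $F_\tau(B)=\psi(\beta)/\beta$ with $\psi(\beta)=\log\bigl(\tfrac{1}{n_B}\sum_{i\in B}e^{\beta s_i}\bigr)$. Because $M(B)$ does not depend on $\tau$, it suffices to show that $\beta\mapsto\psi(\beta)/\beta$ is strictly increasing on $(0,\infty)$. The function $\psi$ is the cumulant generating function of $S_B$ under the uniform law, so $\psi(0)=0$. Its second derivative $\psi''(\beta)$ is the variance of $S_B$ under the tilted law $p^B$ at inverse temperature $\beta$. That tilted law has full support on $B$, so the variance is strictly positive whenever the scores are not constant, and $\psi$ is strictly convex.

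Differentiating gives
\begin{equation*}
\frac{d}{d\beta}\frac{\psi(\beta)}{\beta}=\frac{\beta\psi'(\beta)-\psi(\beta)}{\beta^{2}}.
\end{equation*}
The numerator $h(\beta)=\beta\psi'(\beta)-\psi(\beta)$ satisfies $h(0)=0$ and $h'(\beta)=\beta\psi''(\beta)>0$ for $\beta>0$. Hence $h>0$ on $(0,\infty)$. So $F_\tau$, and therefore $G_\tau$, is strictly increasing in $\beta$, which means strictly decreasing in $\tau$.

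An alternative route goes through Corollary~\ref{cor:gap_kl_uniform}. There $h(\beta)=\beta\,\mathbb E_{p^B}[S_B]-\psi(\beta)=D_{\mathrm{KL}}(p^B\|u^B)$, which is strictly positive because $p^B\neq u^B$ for non-constant scores. This is the step I expect to need the most care: the strictness must come from the tilted variance being positive, rather than from a monotonicity argument that only gives a weak inequality.

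For the limits, as $\tau\to\infty$ (that is, $\beta\to0^+$) I use $\psi(\beta)/\beta\to\psi'(0)=M(B)$ by L'H\^opital's rule or a first-order Taylor expansion, so $G_\tau(B)\to0$. As $\tau\to0^+$, let $s^\ast=\max_{i\in B}s_i$ and let $k\ge1$ be the number of indices attaining it. Factoring out $e^{s^\ast/\tau}$ gives
\begin{equation*}
F_\tau(B)=s^\ast+\tau\log\bigl(k+\textstyle\sum_{s_i<s^\ast}e^{(s_i-s^\ast)/\tau}\bigr)-\tau\log n_B .
\end{equation*}
The inner sum tends to $0$, so the logarithm stays bounded, and the correction terms vanish as $\tau\to0^+$. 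Therefore $F_\tau(B)\to s^\ast$, and $G_\tau(B)\to\max_{i\in B}s_i-M(B)$.
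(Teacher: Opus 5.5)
Your proof is correct and follows essentially the same route as the paper. By the chain rule, your numerator $h(\beta)=\beta\psi'(\beta)-\psi(\beta)=D_{\mathrm{KL}}(p^B\|u^B)$ yields exactly the paper's identity $\frac{d}{d\tau}G_\tau(B)=-D_{\mathrm{KL}}(p^B\|u^B)$, and your two limits match the paper's small-$\beta$ expansion and its isolation of the maximal log-sum-exp term. The differences are cosmetic. You certify strictness through the positive tilted variance $\psi''$, whereas the paper reads it directly from $D_{\mathrm{KL}}(p^B\|u^B)>0$. For $\tau\to\infty$ you use a first-order expansion rather than the paper's second-order one, and the first-order expansion suffices for the limit.
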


\begin{proof}[Proof sketch]
Translation invariance follows because $F_\tau(B)$ and $M(B)$ change by the same additive offset. Differentiating with respect to temperature reduces the derivative to $-D_{\mathrm{KL}}(p^B\|u^B)$, which establishes strict monotonicity for non-constant scores. The high-temperature limit follows from a second-order expansion of the centered exponential moment, whereas the low-temperature limit follows from isolating the maximal term in log-sum-exp. The complete derivation appears in Supplementary Section~\ref{app:proof_boundary_minority}.
\end{proof}

Proposition~\ref{prop:temperature_properties} delineates the relevant regime. High-temperature attention approaches uniform aggregation and makes mean summarization comparatively accurate. Low-temperature or high-contrast attention approaches maximum selection and increases the risk that minority high responses induce substantial underestimation.

\subsection{Binary Boundary-Minority Model and Mass Reversal}
\label{subsec:binary_minority_model}

To isolate the roles of spatial proportion and response advantage, consider the two-point random variable $S_{\alpha,\Delta}$, which equals $0$ with probability $1-\alpha$ and $\Delta$ with probability $\alpha$, where $0<\alpha\leq1/2$ and $\Delta>0$. The model does not assume that real scores are exactly binary; the two states provide the minimal analyzable representation of majority low responses and minority high responses.

\begin{lemma}[Free energy, mean, and Gibbs mass in the binary model]
\label{lem:binary_exact}
The mean, free-energy gap, and exponentially tilted probability of the minority state are
\begin{align}
 M_{\alpha,\Delta}&=\alpha\Delta,\notag\\
 G_\tau(\alpha,\Delta)
 &=\tau\log\!\left[(1-\alpha)+\alpha e^{\Delta/\tau}\right]-\alpha\Delta,\notag\\
 \rho_\tau(\alpha,\Delta)
 &=\frac{\alpha e^{\Delta/\tau}}
 {(1-\alpha)+\alpha e^{\Delta/\tau}}.
\label{eq:binary_closed_forms}
\end{align}
\end{lemma}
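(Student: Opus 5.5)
The plan is a direct computation from the definitions in Eq.~\eqref{eq:block_quantities} and Eq.~\eqref{eq:block_distributions}. I would apply them to the empirical distribution $S_{\alpha,\Delta}$, using the convention of Section~\ref{subsec:attention_setting} that block averages coincide with expectations under the uniformly sampled index. The binary model is treated as the law of $S_B$. All three quantities can then be written as expectations over a two-point law, and no earlier theorem is needed beyond these definitions.

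\textbf{Step 1: the mean.} The mean is $\mathbb E[S_{\alpha,\Delta}]=(1-\alpha)\cdot 0+\alpha\Delta=\alpha\Delta$.

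\textbf{Step 2: the free-energy gap.} The normalized partition value is $Z_\tau/n_B=\mathbb E[e^{S/\tau}]=(1-\alpha)e^{0}+\alpha e^{\Delta/\tau}$. Hence
\[
F_\tau=\tau\log\!\left[(1-\alpha)+\alpha e^{\Delta/\tau}\right].
\]
Subtracting $M_{\alpha,\Delta}$ gives the stated formula for $G_\tau(\alpha,\Delta)$.

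\textbf{Step 3: the minority Gibbs mass.} I would define $\rho_\tau$ as the total conditional Gibbs probability $\sum_{i:\,s_i=\Delta}p_i^B$. Equivalently, it is the probability of the minority state under the exponentially tilted law $e^{S/\tau}/\mathbb E[e^{S/\tau}]$. The numerator is $\alpha e^{\Delta/\tau}$, since a fraction $\alpha$ of the tokens each carry weight $e^{\Delta/\tau}$. The normalizer is the expression from Step 2, which gives the claimed ratio.

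\textbf{Main obstacle.} None of these steps is technically hard. The only point needing care is interpretive: making precise that ``probability of the minority state'' means the aggregated conditional Gibbs mass of the minority subset, so that the $1/n_B$ factors cancel between numerator and denominator. For an idealized (possibly non-finite) $\alpha$, this holds by the random-variable convention stated in the preliminaries. As a sanity check, I would set $\tau=1$ and confirm that the result reduces to Eq.~\eqref{eq:minority_mass}.
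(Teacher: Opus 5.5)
Your proposal is correct and follows the paper's proof essentially step for step. You compute the mean from the two-point law, substitute the exponential moment $(1-\alpha)+\alpha e^{\Delta/\tau}$ into $F_\tau=\tau\log\mathbb E[e^{S/\tau}]$ and subtract the mean, and obtain $\rho_\tau$ as the unnormalized minority mass $\alpha e^{\Delta/\tau}$ divided by the total mass.
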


\begin{proof}
The mean follows directly from the two-point distribution. The exponential moment is $\mathbb E[e^{S_{\alpha,\Delta}/\tau}]=(1-\alpha)+\alpha e^{\Delta/\tau}$; substituting the moment into $F_\tau=\tau\log\mathbb E[e^{S/\tau}]$ and subtracting the mean yields the second expression. The minority probability after exponential tilting equals its unnormalized mass $\alpha e^{\Delta/\tau}$ divided by the total mass, which yields the third expression.
\end{proof}

\begin{proposition}[Critical condition for minority-mass dominance]
\label{prop:minority_threshold}
For a target mass $r\in(0,1)$, the minority state satisfies $\rho_\tau(\alpha,\Delta)\geq r$ if and only if
\begin{equation}
 \Delta\geq
 \tau\log\frac{r(1-\alpha)}{(1-r)\alpha}.
\label{eq:minority_threshold}
\end{equation}
In particular, the minority state carries at least one-half of the Gibbs mass if and only if $\Delta\geq\tau\log[(1-\alpha)/\alpha]$.
\end{proposition}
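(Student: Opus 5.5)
The plan is to start from the closed form for $\rho_\tau(\alpha,\Delta)$ in Lemma~\ref{lem:binary_exact} and reduce the inequality $\rho_\tau(\alpha,\Delta)\geq r$ to a condition on $e^{\Delta/\tau}$ through a chain of equivalences. Each step must be reversible, so that the argument proves both directions of the ``if and only if'' at once.

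\textbf{Clearing the denominator.} The denominator $(1-\alpha)+\alpha e^{\Delta/\tau}$ is strictly positive, because $0<\alpha\leq 1/2$. We may therefore multiply through without changing the direction of the inequality:
\[
\rho_\tau(\alpha,\Delta)\geq r
\iff
\alpha e^{\Delta/\tau}\geq r(1-\alpha)+r\alpha e^{\Delta/\tau}.
\]

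\textbf{Isolating the exponential.} Collect the exponential terms to obtain $(1-r)\alpha e^{\Delta/\tau}\geq r(1-\alpha)$. Since $r\in(0,1)$ and $\alpha>0$, the factor $(1-r)\alpha$ is strictly positive, so dividing by it preserves the inequality:
\[
e^{\Delta/\tau}\geq \frac{r(1-\alpha)}{(1-r)\alpha}.
\]
The right-hand side is strictly positive, because $\alpha\leq 1/2<1$.

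\textbf{Taking logarithms.} The map $\log$ is strictly increasing on $(0,\infty)$, and $\tau>0$. Taking logarithms and multiplying by $\tau$ is therefore an equivalence, and it yields Eq.~\eqref{eq:minority_threshold}.

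\textbf{The one-half case.} Setting $r=1/2$ makes the factor $r/(1-r)$ equal to $1$. The threshold then reduces to $\Delta\geq\tau\log[(1-\alpha)/\alpha]$, which is the stated special case.

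The only step requiring care is the sign bookkeeping, specifically the positivity of $(1-r)\alpha$ and of the ratio inside the logarithm, which keeps every step a genuine equivalence. There is no real obstacle. Alternatively, one can note that $\rho_\tau$ is a strictly increasing logistic function of $\Delta/\tau+\log[\alpha/(1-\alpha)]$ and invert the logit.
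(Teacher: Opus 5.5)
Your proposal is correct and follows essentially the same route as the paper's proof: clear the positive denominator, isolate $(1-r)\alpha e^{\Delta/\tau}\geq r(1-\alpha)$, divide by the positive factor $(1-r)\alpha$, and take logarithms, with every step reversible. Your sign bookkeeping, including the positivity of the ratio inside the logarithm, matches what the paper's argument implicitly uses.
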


\begin{proof}
The denominator of $\rho_\tau(\alpha,\Delta)$ is strictly positive. Cross-multiplication of $\rho_\tau(\alpha,\Delta)\geq r$ gives $(1-r)\alpha e^{\Delta/\tau}\geq r(1-\alpha)$. Dividing by the positive factor $(1-r)\alpha$ and taking natural logarithms produces Eq.~\eqref{eq:minority_threshold}. Every transformation is monotone and reversible, so the condition is both necessary and sufficient.
\end{proof}

Proposition~\ref{prop:minority_threshold} separates minority area from minority attention mass: once the logit advantage exceeds the logarithmic threshold, $\alpha<1/2$ does not prevent the minority state from dominating exponential mass.

\begin{theorem}[Asymptotic mean blindness for a boundary minority]
\label{thm:asymptotic_blindness}
Fix $c\in\mathbb R$ and $\tau>0$, and define $\Delta_\alpha=\tau[\log((1-\alpha)/\alpha)+c]$. As $\alpha\to0^+$,
\begin{align}
 \rho_\tau(\alpha,\Delta_\alpha)
 &\longrightarrow\frac{e^c}{1+e^c},
 &M_{\alpha,\Delta_\alpha}&\longrightarrow0,\notag\\
 G_\tau(\alpha,\Delta_\alpha)
 &\longrightarrow\tau\log(1+e^c)>0.&&
\label{eq:asymptotic_blindness}
\end{align}
\end{theorem}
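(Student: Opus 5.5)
The plan is to substitute $\Delta_\alpha$ directly into the closed forms of Lemma~\ref{lem:binary_exact} and simplify each expression before taking limits. The key identity is
\begin{equation*}
 e^{\Delta_\alpha/\tau}=\frac{1-\alpha}{\alpha}\,e^{c},
 \qquad\text{so}\qquad
 \alpha e^{\Delta_\alpha/\tau}=(1-\alpha)e^{c}.
\end{equation*}
This identity is what makes the minority's unnormalized mass comparable to the majority's mass $1-\alpha$, uniformly in $\alpha$.

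\textbf{Gibbs mass.} Inserting the identity into $\rho_\tau$ gives
\begin{equation*}
 \rho_\tau(\alpha,\Delta_\alpha)=\frac{(1-\alpha)e^c}{(1-\alpha)+(1-\alpha)e^c}=\frac{e^c}{1+e^c}.
\end{equation*}
The factor $1-\alpha>0$ cancels, so this value holds exactly for every $\alpha\in(0,1/2]$, not only in the limit. Consequently the first limit is immediate, and it is consistent with Proposition~\ref{prop:minority_threshold}.

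\textbf{Mean.} We have $M_{\alpha,\Delta_\alpha}=\alpha\Delta_\alpha=\tau\bigl[\alpha\log(1-\alpha)-\alpha\log\alpha+c\alpha\bigr]$. As $\alpha\to0^+$, $\alpha\log(1-\alpha)\to0$ and $c\alpha\to0$ trivially. The only step requiring a standard fact is $\alpha\log\alpha\to0$, which follows from L'H\^opital's rule applied to $\log\alpha/(1/\alpha)$. I expect this to be the only genuine (though mild) obstacle: it is what shows the mean stays blind even though $\Delta_\alpha\to\infty$.

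\textbf{Free-energy gap.} The exponential moment becomes $(1-\alpha)+(1-\alpha)e^c=(1-\alpha)(1+e^c)$, hence
\begin{equation*}
 G_\tau(\alpha,\Delta_\alpha)=\tau\log(1-\alpha)+\tau\log(1+e^c)-\alpha\Delta_\alpha.
\end{equation*}
The first term tends to $0$ by continuity of the logarithm, and the last term tends to $0$ by the mean step. The limit is therefore $\tau\log(1+e^c)$, which is strictly positive because $1+e^c>1$ and $\tau>0$. If a cross-check is wanted, the positivity also agrees with Theorem~\ref{thm:jensen_underestimation}, since the two-point distribution is non-constant for each $\alpha$.
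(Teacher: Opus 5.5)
Your proposal is correct and matches the paper's argument step for step: substituting $\Delta_\alpha$ makes the unnormalized minority mass $(1-\alpha)e^c$, so $\rho_\tau$ is constant in $\alpha$; the mean vanishes because $\alpha\log(1/\alpha)\to0$; and the gap limit follows by factoring $1-\alpha$ out of the exponential moment. One small caveat: when $c<0$, $\Delta_\alpha>0$ (as the binary model requires) only for sufficiently small $\alpha$, so your claim that the formula for $\rho_\tau$ holds on all of $(0,1/2]$ should be read as an algebraic identity of the closed form, which does not affect the limit.
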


\begin{proof}[Proof sketch]
Substitution of $\Delta_\alpha$ into Eq.~\eqref{eq:binary_closed_forms} transforms the unnormalized minority mass into $(1-\alpha)e^c$, making the Gibbs mass independent of $\alpha$. The mean limit follows from $\alpha\log(1/\alpha)\to0$, while the free-energy limit follows by extracting the common factor $1-\alpha$. Step-by-step algebra and the limiting argument appear in Supplementary Section~\ref{app:proof_boundary_minority}.
\end{proof}

Theorem~\ref{thm:asymptotic_blindness} captures the central mechanism of boundary-minority underestimation: spatial support and mean contribution can vanish simultaneously while exponential mass and the free-energy gap remain non-zero.

\subsection{Boundary Regimes and Necessary Conditions}
\label{subsec:boundary_regimes}

\begin{proposition}[Boundary regimes of the binary model]
\label{prop:binary_boundaries}
The binary model satisfies the following properties. The free-energy gap equals zero when $\Delta=0$ or $\alpha\in\{0,1\}$. For fixed $\Delta$ and $\alpha\to0^+$,
\begin{equation}
 G_\tau(\alpha,\Delta)
 =\alpha\left[\tau(e^{\Delta/\tau}-1)-\Delta\right]+O(\alpha^2)
 \longrightarrow0.
\label{eq:fixed_delta_small_alpha}
\end{equation}
For fixed $\alpha\in(0,1)$ and $\Delta\to\infty$,
\begin{equation}
 G_\tau(\alpha,\Delta)
 =(1-\alpha)\Delta+\tau\log\alpha+o(1).
\label{eq:large_delta_gap}
\end{equation}
Moreover, for fixed $\alpha\in(0,1)$, $G_\tau(\alpha,\Delta)$ is strictly increasing and strictly convex for $\Delta>0$.
\end{proposition}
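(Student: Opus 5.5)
All four claims follow from the closed form in Lemma~\ref{lem:binary_exact},
\[
G_\tau(\alpha,\Delta)=\tau\log\!\left[(1-\alpha)+\alpha e^{\Delta/\tau}\right]-\alpha\Delta ,
\]
so the plan is to treat each regime by direct substitution or a one-variable expansion. I take the three regimes first and the monotonicity/convexity claim last.

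\emph{Zero-gap cases.} At $\Delta=0$ the bracket equals $1$ and $\alpha\Delta=0$, so $G_\tau=0$. At $\alpha=0$ the bracket is $1$ and the mean term vanishes. At $\alpha=1$ the bracket is $e^{\Delta/\tau}$, so $G_\tau=\Delta-\Delta=0$. Each case is also a constant-score distribution, which is consistent with the equality case of Theorem~\ref{thm:jensen_underestimation}.

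\emph{Small $\alpha$.} With $\Delta$ fixed, set $a=e^{\Delta/\tau}-1$, so the bracket is $1+\alpha a$. I expand $\log(1+\alpha a)=\alpha a+O(\alpha^2)$, which is valid because $a$ is a fixed constant. This gives $G_\tau=\alpha[\tau a-\Delta]+O(\alpha^2)$, which is exactly Eq.~\eqref{eq:fixed_delta_small_alpha} and tends to $0$. The coefficient is nonnegative because $e^x\ge 1+x$.

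\emph{Large $\Delta$.} With $\alpha$ fixed, I factor $\alpha e^{\Delta/\tau}$ out of the bracket:
\[
\tau\log\!\left[\alpha e^{\Delta/\tau}\bigl(1+\tfrac{1-\alpha}{\alpha}e^{-\Delta/\tau}\bigr)\right]
=\Delta+\tau\log\alpha+\tau\log\bigl(1+\tfrac{1-\alpha}{\alpha}e^{-\Delta/\tau}\bigr).
\]
The last term is $o(1)$ as $\Delta\to\infty$. Subtracting $\alpha\Delta$ gives Eq.~\eqref{eq:large_delta_gap}.

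\emph{Monotonicity and convexity.} This is the only step needing an actual argument. I differentiate in $\Delta$:
\[
\partial_\Delta G_\tau=\frac{\alpha e^{\Delta/\tau}}{(1-\alpha)+\alpha e^{\Delta/\tau}}-\alpha=\rho_\tau(\alpha,\Delta)-\alpha .
\]
Since $\rho_\tau$ is strictly increasing in $\Delta$ and equals $\alpha$ at $\Delta=0$, this derivative is strictly positive for $\Delta>0$. The second derivative is
\[
\partial_\Delta^2 G_\tau=\partial_\Delta\rho_\tau=\tfrac{1}{\tau}\,\rho_\tau(1-\rho_\tau),
\]
which is the usual logistic-derivative identity, obtained by writing $\rho_\tau=\sigma\bigl(\Delta/\tau+\log\tfrac{\alpha}{1-\alpha}\bigr)$. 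It is strictly positive because $\rho_\tau\in(0,1)$ for $\alpha\in(0,1)$. So $G_\tau$ is strictly increasing and strictly convex in $\Delta$.

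The main point to watch is that the $O(\alpha^2)$ and $o(1)$ remainders are uniform only for fixed $\Delta$ and fixed $\alpha$ respectively. I will state them that way rather than claim any uniformity.
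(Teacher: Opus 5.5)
Your proposal is correct and follows the same route as the paper's proof. You handle the zero-gap cases by direct substitution, small $\alpha$ by a first-order expansion of $\log(1+x)$, large $\Delta$ by factoring $\alpha e^{\Delta/\tau}$ out of the log-sum-exp, and monotonicity and convexity by the signs of $\partial_\Delta G_\tau=\rho_\tau-\alpha$ and $\partial_\Delta^2 G_\tau=\rho_\tau(1-\rho_\tau)/\tau$.
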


\begin{proof}[Proof sketch]
The small-area limit follows from a local expansion of $\log(1+x)$, and the large-advantage limit follows by factoring $\alpha e^{\Delta/\tau}$ from log-sum-exp. The signs of the first and second derivatives with respect to $\Delta$ establish monotonicity and convexity. The complete proof appears in Supplementary Section~\ref{app:proof_boundary_minority}.
\end{proof}

Equation~\eqref{eq:fixed_delta_small_alpha} prevents an overly broad conclusion: small spatial support alone does not require refinement. A minority region retains substantial Gibbs mass only when its response advantage grows sufficiently as its area shrinks.

\subsection{Validity and Failure of Finite-Order Moment Corrections}
\label{subsec:moment_limitations}

Let $X_B=S_B-M(B)$ denote the centered block score, and let $\kappa_r(B)$ denote its $r$-th cumulant. Because scores in a finite block are bounded, the moment-generating function is finite over the real line and the cumulant-generating function is analytic in a neighborhood of the origin.

\begin{lemma}[Cumulant expansion in the weak-contrast regime]
\label{lem:cumulant_expansion}
Scale all centered scores in the block by $\lambda$. As $\lambda\to0$,
\begin{equation}
 G_\tau(\lambda S_B)
 =\frac{\lambda^2}{2\tau}\kappa_2(B)
 +\frac{\lambda^3}{6\tau^2}\kappa_3(B)
 +\frac{\lambda^4}{24\tau^3}\kappa_4(B)
 +O(\lambda^5).
\label{eq:cumulant_expansion}
\end{equation}
\end{lemma}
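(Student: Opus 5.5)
The plan is to express the gap through the cumulant-generating function of the centered score and then expand that function in powers of $\lambda$. First, interpret the statement. Scaling the centered scores by $\lambda$ means replacing $S_B$ by $M(B)+\lambda X_B$. By the translation invariance of Proposition~\ref{prop:temperature_properties}, this is equivalent to studying the block whose scores are $\lambda X_B$. That block has mean zero, so its gap equals its free energy:
\begin{equation*}
 G_\tau(\lambda S_B)=\tau\log\mathbb E\!\left[e^{\lambda X_B/\tau}\right]=\tau K_B(\lambda/\tau),
\end{equation*}
where $K_B(t)=\log\mathbb E[e^{tX_B}]$ is the cumulant-generating function of $X_B$.

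Next, justify analyticity. $X_B$ takes finitely many values, so $\mathbb E[e^{tX_B}]=n_B^{-1}\sum_{i\in B}e^{t(s_i-M(B))}$ is an entire function of $t$ and equals $1$ at $t=0$. Its logarithm is therefore analytic in a neighborhood of $t=0$, and admits the convergent expansion $K_B(t)=\sum_{r\geq1}\kappa_r(B)t^r/r!$ with the usual cumulants.

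Then identify the low-order coefficients and substitute. The first cumulant is $\kappa_1(B)=\mathbb E[X_B]=0$ because $X_B$ is centered. Substituting $t=\lambda/\tau$ and multiplying by $\tau$ gives the general term $\tau\,\kappa_r(B)\lambda^r/(r!\,\tau^r)=\kappa_r(B)\lambda^r/(r!\,\tau^{r-1})$. For $r=2,3,4$ this yields the coefficients $1/(2\tau)$, $1/(6\tau^2)$ and $1/(24\tau^3)$ claimed in Eq.~\eqref{eq:cumulant_expansion}.

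Finally, control the remainder. The terms with $r\geq5$ form $\lambda^5$ times a convergent power series in $\lambda$, which is bounded for $|\lambda|$ small. This gives the $O(\lambda^5)$ remainder, with a constant that depends on $B$ and $\tau$ only.

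I expect no step to be a serious obstacle. The only care needed is in the first step: reading $G_\tau(\lambda S_B)$ correctly as the gap of the rescaled centered block, so that $\kappa_1$ vanishes. It also matters that the claim is local, holding only as $\lambda\to0$.
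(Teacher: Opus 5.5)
Your proposal is correct and follows the same route as the paper. Translation invariance reduces $G_\tau(\lambda S_B)$ to $\tau K_B(\lambda/\tau)$ for the centered variable, and expanding the cumulant-generating function with $\kappa_1(B)=0$ gives the stated coefficients; your explicit argument for analyticity and for the $O(\lambda^5)$ remainder fills in details that the paper's sketch only asserts.
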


\begin{proof}[Proof sketch]
Translation invariance removes the block mean, after which $G_\tau$ equals $\tau$ times the cumulant-generating function of the centered variable evaluated at $\lambda/\tau$. Expanding the generating function around the origin and using $\kappa_1(B)=0$ gives the result. The complete expansion appears in Supplementary Section~\ref{app:proof_moments}.
\end{proof}

Lemma~\ref{lem:cumulant_expansion} shows that $M(B)+\operatorname{Var}(S_B)/(2\tau)$ is only a second-order local approximation in weak-contrast regions. The correction lacks a uniform guarantee when skewness, kurtosis, or higher-order cumulants remain non-negligible.

\begin{theorem}[Asymptotic blindness of finite-order moment corrections]
\label{thm:finite_moment_blindness}
Consider $S_{\alpha,\Delta_\alpha}$ from Theorem~\ref{thm:asymptotic_blindness}. For every fixed integer $r\geq1$, its $r$-th central absolute moment $\mu_r(\alpha)=\mathbb E|S_{\alpha,\Delta_\alpha}-M_{\alpha,\Delta_\alpha}|^r$ satisfies
\begin{equation}
 \mu_r(\alpha)
 =O\!\left(\alpha[\log(1/\alpha)]^r\right)
 \longrightarrow0.
\label{eq:central_moment_vanish}
\end{equation}
Furthermore, let $C:\mathbb R^{m-1}\to\mathbb R$ be continuous at the origin and satisfy $C(0,\ldots,0)=0$. Every correction of the form $M_{\alpha,\Delta_\alpha}+C(\mu_2,\ldots,\mu_m)$, depending only on a fixed finite set of central moments, converges to zero, whereas the exact free energy converges to $\tau\log(1+e^c)>0$.
\end{theorem}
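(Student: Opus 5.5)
The plan is to compute the central absolute moments of the two-point variable exactly and then combine continuity of $C$ with Theorem~\ref{thm:asymptotic_blindness}. Write $S=S_{\alpha,\Delta_\alpha}$ and $M=M_{\alpha,\Delta_\alpha}=\alpha\Delta_\alpha$. The centered variable $S-M$ takes two values:
\begin{itemize}[leftmargin=*]
\item $-\alpha\Delta_\alpha$ with probability $1-\alpha$;
\item $(1-\alpha)\Delta_\alpha$ with probability $\alpha$.
\end{itemize}
Hence, for every integer $r\geq1$,
\begin{equation*}
 \mu_r(\alpha)=\Delta_\alpha^r\bigl[(1-\alpha)\alpha^r+\alpha(1-\alpha)^r\bigr]
 =\alpha(1-\alpha)\Delta_\alpha^r\bigl[\alpha^{r-1}+(1-\alpha)^{r-1}\bigr].
\end{equation*}
The bracket lies in $(0,2]$ and $\alpha(1-\alpha)\leq\alpha$, so $\mu_r(\alpha)\leq2\alpha\Delta_\alpha^r$.

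Next we bound $\Delta_\alpha$. By definition, $\Delta_\alpha=\tau[\log((1-\alpha)/\alpha)+c]\leq\tau[\log(1/\alpha)+|c|]$. For $\alpha<e^{-|c|}$ this is at most $2\tau\log(1/\alpha)$. Therefore $\mu_r(\alpha)\leq2^{r+1}\tau^r\alpha[\log(1/\alpha)]^r$, which gives the claimed $O(\alpha[\log(1/\alpha)]^r)$ bound.

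The convergence to zero is the standard fact $\alpha[\log(1/\alpha)]^r\to0$ as $\alpha\to0^+$. To see it, substitute $\alpha=e^{-t}$; the expression becomes $t^re^{-t}$, which tends to $0$ as $t\to\infty$. This single step is where the fixed-$r$ hypothesis matters: the exponent $r$ must not grow with $\alpha$. It is the only genuinely analytic point in the argument, and it is routine.

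For the correction statement, the vector $(\mu_2(\alpha),\ldots,\mu_m(\alpha))$ tends to the origin by the first part. Continuity of $C$ at the origin together with $C(0,\ldots,0)=0$ then gives $C(\mu_2,\ldots,\mu_m)\to0$. Theorem~\ref{thm:asymptotic_blindness} gives $M_{\alpha,\Delta_\alpha}\to0$, so the sum $M_{\alpha,\Delta_\alpha}+C(\mu_2,\ldots,\mu_m)$ tends to $0$.

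On the other side, the exact free energy is $F_\tau=M_{\alpha,\Delta_\alpha}+G_\tau(\alpha,\Delta_\alpha)$. By the same theorem, $M_{\alpha,\Delta_\alpha}\to0$ and $G_\tau(\alpha,\Delta_\alpha)\to\tau\log(1+e^c)$, so $F_\tau\to\tau\log(1+e^c)>0$. Consequently the gap between the exact free energy and any such finite-moment correction stays bounded away from zero, which completes the argument.

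I expect no real obstacle. The only point needing care is to state the $O(\cdot)$ constant explicitly, handling the additive $c$ in $\Delta_\alpha$ by restricting to $\alpha<e^{-|c|}$. One should also note that $\mu_1$ is excluded from the arguments of $C$ only by convention; including it would change nothing, since $\mu_1(\alpha)\to0$ as well.
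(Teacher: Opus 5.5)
Your argument is correct and is essentially the paper's proof. Like the paper, you split $\mu_r(\alpha)$ into the majority contribution $(1-\alpha)(\alpha\Delta_\alpha)^r$ and the minority contribution $\alpha((1-\alpha)\Delta_\alpha)^r$, and use $\Delta_\alpha=O(\log(1/\alpha))$ to bound both by $O(\alpha[\log(1/\alpha)]^r)$. You then conclude via continuity of $C$ at the origin together with $M_{\alpha,\Delta_\alpha}\to0$ and $G_\tau(\alpha,\Delta_\alpha)\to\tau\log(1+e^c)$ from Theorem~\ref{thm:asymptotic_blindness}.

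Your explicit constant $2^{r+1}\tau^r$ is a welcome refinement of the sketch. One small correction: write $|\Delta_\alpha|^r$, or restrict to $\alpha$ small enough that $\Delta_\alpha>0$, as the binary model requires. For $c<0$ and $\alpha$ just below $e^{-|c|}$, $\Delta_\alpha$ is still slightly negative.
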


\begin{proof}[Proof sketch]
The central absolute moment of the binary model separates into contributions from the majority and minority states. The growth rate $\Delta_\alpha=O(\log(1/\alpha))$ bounds both contributions by $O(\alpha[\log(1/\alpha)]^r)$. The finite-dimensional moment vector therefore converges to the origin, and continuity of $C$ yields the inconsistency. Supplementary Section~\ref{app:proof_moments} provides the complete proof.
\end{proof}

\begin{corollary}[Inconsistency of the mean-variance correction]
\label{cor:variance_inconsistent}
Under the construction of Theorem~\ref{thm:asymptotic_blindness},
$M_{\alpha,\Delta_\alpha}+\operatorname{Var}(S_{\alpha,\Delta_\alpha})/(2\tau)\to0$, whereas $F_\tau(S_{\alpha,\Delta_\alpha})\to\tau\log(1+e^c)$.
\end{corollary}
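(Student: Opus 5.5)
The corollary is a direct specialization of Theorem~\ref{thm:finite_moment_blindness} with $m=2$ and the correction map $C(\mu_2)=\mu_2/(2\tau)$. This $C$ is continuous everywhere, in particular at the origin, and satisfies $C(0)=0$. The plan therefore has three steps.

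\textbf{Step 1 (variance as an absolute moment).} First identify $\operatorname{Var}(S_{\alpha,\Delta_\alpha})$ with the second central absolute moment $\mu_2(\alpha)$. This holds because $|x|^2=x^2$. Explicitly, for the two-point law, $\operatorname{Var}=\alpha(1-\alpha)\Delta_\alpha^2$.

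\textbf{Step 2 (the correction vanishes).} Invoke Eq.~\eqref{eq:central_moment_vanish} with $r=2$, which gives $\mu_2(\alpha)=O(\alpha[\log(1/\alpha)]^2)\to0$. The same conclusion can be checked directly. Since $\Delta_\alpha=\tau[\log((1-\alpha)/\alpha)+c]=O(\log(1/\alpha))$, we have $\alpha\Delta_\alpha^2\to0$ by the standard limit $\alpha[\log(1/\alpha)]^k\to0$. Combining this with $M_{\alpha,\Delta_\alpha}\to0$ from Theorem~\ref{thm:asymptotic_blindness}, the corrected quantity $M+\operatorname{Var}/(2\tau)$ tends to $0$.

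\textbf{Step 3 (the exact free energy does not vanish).} Write $F_\tau=M+G_\tau$, and use $M\to0$ together with $G_\tau\to\tau\log(1+e^c)$ from Theorem~\ref{thm:asymptotic_blindness}. This yields $F_\tau(S_{\alpha,\Delta_\alpha})\to\tau\log(1+e^c)$. As a check, $F_\tau=\tau\log[(1-\alpha)(1+e^c)]$ exactly, which has the same limit.

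There is no genuine obstacle here. The only point needing care is the identification in Step~1, namely that the variance equals the \emph{absolute} central moment $\mu_2$ used in Theorem~\ref{thm:finite_moment_blindness}. This is immediate for even $r$.
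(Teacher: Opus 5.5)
Your proposal is correct and follows the paper's route: the corollary is the $m=2$ instance of Theorem~\ref{thm:finite_moment_blindness} with $C(\mu_2)=\mu_2/(2\tau)$, and the free-energy limit comes from Theorem~\ref{thm:asymptotic_blindness}. Your explicit checks, $\operatorname{Var}(S_{\alpha,\Delta_\alpha})=\alpha(1-\alpha)\Delta_\alpha^2\to0$ and $F_\tau=\tau\log[(1-\alpha)(1+e^c)]$, are accurate and make the argument self-contained.
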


The result does not deny the usefulness of variance correction for finite samples and weakly skewed distributions. Instead, no fixed-order correction that is continuous near zero moments can uniformly cover the boundary-minority limit in which area vanishes but exponential mass remains non-zero.

\subsection{Partitioned Free-Energy Decomposition and Hierarchical Lower Bounds}
\label{subsec:partition_theory}

\begin{lemma}[Partition identity for free energy]
\label{lem:partition_identity}
For every partition $\mathcal P(B)$, the exact free energy satisfies
\begin{equation}
 F_\tau(B)
 =\tau\log\!\left[
 \sum_{C\in\mathcal P(B)}
 \frac{n_C}{n_B}
 \exp\!\left(\frac{F_\tau(C)}{\tau}\right)
 \right].
\label{eq:partition_identity}
\end{equation}
\end{lemma}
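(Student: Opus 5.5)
The identity is a direct consequence of additivity of the unnormalized Gibbs mass over disjoint sets, so the plan is to unwind the definition of $F_\tau$ on each child block and reassemble. First, from Eq.~\eqref{eq:block_quantities} applied to a child $C\in\mathcal P(B)$, invert the logarithm to express the child's Gibbs mass through its free energy:
\begin{equation*}
 Z_\tau(C)=n_C\exp\!\left(\frac{F_\tau(C)}{\tau}\right).
\end{equation*}
This step is legitimate because every child is non-empty, so $n_C\geq1$ and $F_\tau(C)$ is finite under the standing finiteness assumption on scores.

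Second, use that the children are pairwise disjoint with union $B$, so the sum defining $Z_\tau(B)$ splits as
\begin{equation*}
 Z_\tau(B)=\sum_{C\in\mathcal P(B)}\sum_{i\in C}\exp(s_i/\tau)=\sum_{C\in\mathcal P(B)}Z_\tau(C).
\end{equation*}
Substituting the expression from the first step and dividing by $n_B$ gives
\begin{equation*}
 \frac{Z_\tau(B)}{n_B}=\sum_{C\in\mathcal P(B)}\frac{n_C}{n_B}\exp\!\left(\frac{F_\tau(C)}{\tau}\right).
\end{equation*}
Applying $\tau\log(\cdot)$ to both sides and recalling the definition of $F_\tau(B)$ yields Eq.~\eqref{eq:partition_identity}.

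There is no genuine obstacle. The only points to state explicitly are that the partition covers $B$ exactly once, so no index is double-counted or omitted, and that all quantities inside the logarithm are strictly positive. As a remark that sets up the next results, comparing Eq.~\eqref{eq:partition_identity} with the proxy in Eq.~\eqref{eq:partition_proxy} shows that the two expressions differ only by replacing each $F_\tau(C)$ with $M(C)$. Since Theorem~\ref{thm:jensen_underestimation} gives $M(C)\leq F_\tau(C)$ and $\tau\log\sum w_C e^{x_C/\tau}$ is monotone in each $x_C$, the proxy satisfies $\Phi_\tau(B;\mathcal P)\leq F_\tau(B)$. This is the hierarchical lower bound the section is building toward.
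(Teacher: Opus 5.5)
Your proof is correct and follows the paper's argument: both split the block Gibbs sum over the disjoint children, identify each normalized child sum $Z_\tau(C)/n_C$ with $e^{F_\tau(C)/\tau}$, and then apply $\tau\log$. Your closing remark that $\Phi_\tau(B;\mathcal P)\leq F_\tau(B)$ matches how the paper obtains the right-hand inequality of Proposition~\ref{prop:proxy_bounds}.
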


\begin{proof}
Pairwise disjointness and $\bigcup_{C\in\mathcal P(B)}C=B$ imply
\begin{align}
 \frac{1}{n_B}\sum_{i\in B}e^{s_i/\tau}
 &=\sum_{C\in\mathcal P(B)}\frac{n_C}{n_B}
 \left(\frac{1}{n_C}\sum_{i\in C}e^{s_i/\tau}\right)\notag\\
 &=\sum_{C\in\mathcal P(B)}\frac{n_C}{n_B}
 e^{F_\tau(C)/\tau}.
\label{eq:partition_sum_decomposition}
\end{align}
Taking the logarithm and multiplying by $\tau$ proves Eq.~\eqref{eq:partition_identity}.
\end{proof}

\begin{proposition}[Two-sided bounds for the hierarchical proxy]
\label{prop:proxy_bounds}
For every partition $\mathcal P(B)$,
\begin{equation}
 M(B)\leq\Phi_\tau(B;\mathcal P)\leq F_\tau(B).
\label{eq:proxy_sandwich}
\end{equation}
Consequently, the refinement increment $\Phi_\tau(B;\mathcal P)-M(B)$ is a non-negative lower bound on the exact free-energy gap $G_\tau(B)$.
\end{proposition}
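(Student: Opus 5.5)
The plan is to prove the two inequalities in Eq.~\eqref{eq:proxy_sandwich} separately. Each one is a Jensen-type comparison, applied at a different level of the partition. Both reduce to the fact, already used in Theorem~\ref{thm:jensen_underestimation}, that $x\mapsto\exp(x/\tau)$ is convex. The non-negative lower bound on $G_\tau(B)$ then follows by subtracting $M(B)$ throughout.

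\emph{Upper bound $\Phi_\tau(B;\mathcal P)\le F_\tau(B)$.} I will apply Theorem~\ref{thm:jensen_underestimation} to each child block $C\in\mathcal P(B)$. This gives $M(C)\le F_\tau(C)$, and since the exponential is increasing, $\exp(M(C)/\tau)\le\exp(F_\tau(C)/\tau)$. Weighting by $n_C/n_B>0$ and summing over $C$ bounds the bracket in Eq.~\eqref{eq:partition_proxy} by the bracket in Lemma~\ref{lem:partition_identity}. Because $\tau\log(\cdot)$ is increasing, this yields $\Phi_\tau(B;\mathcal P)\le F_\tau(B)$, where the right-hand side is identified through the partition identity Eq.~\eqref{eq:partition_identity}.

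\emph{Lower bound $M(B)\le\Phi_\tau(B;\mathcal P)$.} Here I apply Jensen's inequality at the coarser level, to the discrete distribution that places weight $w_C=n_C/n_B$ on the value $M(C)$. These weights sum to one because the children partition $B$. Convexity gives
\[
\sum_{C}w_C\exp\!\left(\frac{M(C)}{\tau}\right)\ge\exp\!\left(\frac{1}{\tau}\sum_C w_C M(C)\right).
\]
Disjointness lets me compute $\sum_C w_C M(C)=\frac{1}{n_B}\sum_C\sum_{i\in C}s_i=M(B)$. Taking $\tau\log$ of both sides gives $\Phi_\tau(B;\mathcal P)\ge M(B)$.

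\emph{Consequence.} Subtracting $M(B)$ from the sandwich gives $0\le\Phi_\tau(B;\mathcal P)-M(B)\le F_\tau(B)-M(B)=G_\tau(B)$, which is the claimed lower bound on the gap.

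No step is a real obstacle. The point needing the most care is the identity $\sum_C w_C M(C)=M(B)$, because it relies on pairwise disjointness together with the union covering $B$. I would also note, optionally, when each inequality is an equality. The lower bound is tight exactly when all child means are equal. The upper bound is tight exactly when every child block is internally constant, by the equality case of Theorem~\ref{thm:jensen_underestimation}. These cases are consistent with the two extreme partitions noted after Eq.~\eqref{eq:partition_proxy}.
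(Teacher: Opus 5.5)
Your proof is correct and follows the paper's argument. The lower bound comes from Jensen's inequality applied to the child means with weights $n_C/n_B$, whose weighted average is $M(B)$; the upper bound comes from $M(C)\le F_\tau(C)$ (Theorem~\ref{thm:jensen_underestimation}), monotonicity of the exponential, and the partition identity of Lemma~\ref{lem:partition_identity}. Your equality characterizations also hold, because every weight is strictly positive and $x\mapsto e^{x/\tau}$ is strictly convex.
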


\begin{proof}
The area-weighted average of child means equals the parent mean: $\sum_C(n_C/n_B)M(C)=M(B)$. Applying Jensen's inequality to the strictly convex function $x\mapsto e^{x/\tau}$ gives
\begin{equation}
 \sum_C\frac{n_C}{n_B}e^{M(C)/\tau}
 \geq
 e^{\sum_C(n_C/n_B)M(C)/\tau}
 =e^{M(B)/\tau}.
\end{equation}
Taking the logarithm and multiplying by $\tau$ yields the left inequality. Theorem~\ref{thm:jensen_underestimation} gives $M(C)\leq F_\tau(C)$ for every child. Monotonicity of the exponential function together with Lemma~\ref{lem:partition_identity} yields the right inequality.
\end{proof}

\begin{theorem}[Monotone consistency under partition refinement]
\label{thm:partition_monotonicity}
Let $\mathcal Q(B)$ refine $\mathcal P(B)$, meaning that every block in $\mathcal Q$ is contained in a block of $\mathcal P$. Then
\begin{equation}
 \Phi_\tau(B;\mathcal P)
 \leq\Phi_\tau(B;\mathcal Q)
 \leq F_\tau(B).
\label{eq:partition_monotonicity}
\end{equation}
The second inequality becomes equality when $\mathcal Q(B)$ is the single-token partition.
\end{theorem}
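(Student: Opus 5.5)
The plan is to reduce the monotonicity claim to Proposition~\ref{prop:proxy_bounds} applied blockwise, since the statement is a local Jensen step repeated inside each coarse block.

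\emph{First inequality.} Fix $P\in\mathcal P(B)$ and let $\mathcal Q_P=\{C\in\mathcal Q(B):C\subseteq P\}$. Because $\mathcal Q$ refines $\mathcal P$ and both are partitions of $B$, each $C\in\mathcal Q$ lies in exactly one $P$: disjointness of the $\mathcal P$-blocks and non-emptiness of $C$ exclude two distinct containing blocks. Hence $\mathcal Q_P$ is a partition of $P$, and the families $\{\mathcal Q_P\}_{P\in\mathcal P}$ together exhaust $\mathcal Q$.

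Applying the left inequality of Eq.~\eqref{eq:proxy_sandwich} to the block $P$ with partition $\mathcal Q_P$ gives
\begin{equation*}
 e^{M(P)/\tau}\leq\sum_{C\in\mathcal Q_P}\frac{n_C}{n_P}e^{M(C)/\tau}.
\end{equation*}
This is just Jensen's inequality together with the identity $\sum_{C\in\mathcal Q_P}(n_C/n_P)M(C)=M(P)$. Multiplying by $n_P/n_B$ and summing over $P\in\mathcal P$ yields
\begin{equation*}
 \sum_{P\in\mathcal P}\frac{n_P}{n_B}e^{M(P)/\tau}\leq\sum_{C\in\mathcal Q}\frac{n_C}{n_B}e^{M(C)/\tau}.
\end{equation*}
Here the regrouping of the right-hand side uses the disjoint decomposition of $\mathcal Q$ into the families $\mathcal Q_P$. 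Taking $\tau\log$, which is monotone since $\tau>0$, gives $\Phi_\tau(B;\mathcal P)\leq\Phi_\tau(B;\mathcal Q)$.

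\emph{Second inequality and equality case.} The right inequality $\Phi_\tau(B;\mathcal Q)\leq F_\tau(B)$ is exactly the upper bound in Proposition~\ref{prop:proxy_bounds} applied to $\mathcal Q$. For the single-token partition, each child $C=\{i\}$ has $M(C)=s_i$ and $n_C=1$. The proxy is then $\tau\log\bigl[\frac{1}{n_B}\sum_{i\in B}e^{s_i/\tau}\bigr]=F_\tau(B)$, as already noted after Eq.~\eqref{eq:partition_proxy}.

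\emph{Expected obstacle.} The only point needing care is the combinatorial bookkeeping: verifying that refinement induces genuine sub-partitions $\mathcal Q_P$ of each $P$, so that the weights regroup correctly. Once that is established, the analytic content is entirely inherited from the earlier Jensen argument.
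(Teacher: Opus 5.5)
Your proposal is correct and matches the paper's argument. You restrict $\mathcal Q$ to each coarse block $P$, apply Jensen there using $\sum_{C\in\mathcal Q_P}(n_C/n_P)M(C)=M(P)$, weight by $n_P/n_B$ and sum over $\mathcal P$, then take the upper bound and the single-token equality from Proposition~\ref{prop:proxy_bounds} and the definition of $\Phi_\tau$. The only step left implicit is that $\mathcal Q_P$ covers $P$: each token of $P$ lies in some $C\in\mathcal Q$, and the unique coarse block containing that $C$ must be $P$. Also, the upper bound follows from your first inequality alone, since the single-token partition refines $\mathcal Q$.
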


\begin{proof}[Proof sketch]
Within each coarse block, express the parent mean as the area-weighted average of descendant means and apply Jensen's inequality to the exponential function. Summation over all coarse blocks shows that refinement cannot decrease the proxy free energy. The single-token partition equals the exact free energy by definition. Supplementary Section~\ref{app:proof_partition} contains the complete proof.
\end{proof}

Theorem~\ref{thm:partition_monotonicity} establishes that hierarchical refinement constructs an ordered sequence of lower bounds that converges to the exact free energy, rather than an unordered collection of heuristic estimates.

\begin{theorem}[Recursive residual decomposition of parent and child gaps]
\label{thm:recursive_residual}
For a partition $\mathcal P(B)$, define
\begin{equation}
 \pi_C=
 \frac{(n_C/n_B)e^{M(C)/\tau}}
 {\sum_{R\in\mathcal P(B)}(n_R/n_B)e^{M(R)/\tau}}.
\label{eq:coarse_child_weights}
\end{equation}
The exact free-energy gap satisfies
\begin{align}
 G_\tau(B)
 &=\Phi_\tau(B;\mathcal P)-M(B)\notag\\
 &\quad+\tau\log\!\left[
 \sum_{C\in\mathcal P(B)}
 \pi_Ce^{G_\tau(C)/\tau}
 \right].
\label{eq:recursive_gap_decomposition}
\end{align}
\end{theorem}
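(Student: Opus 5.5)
The plan is to start from the partition identity (Lemma~\ref{lem:partition_identity}) and rewrite each child's exact free energy as its mean plus its gap, $F_\tau(C)=M(C)+G_\tau(C)$. This gives
\begin{equation*}
 e^{F_\tau(B)/\tau}=\sum_{C\in\mathcal P(B)}\frac{n_C}{n_B}e^{M(C)/\tau}e^{G_\tau(C)/\tau}.
\end{equation*}
Write $Z_\Phi=\sum_{R\in\mathcal P(B)}(n_R/n_B)e^{M(R)/\tau}$ for the normalizer. By the definition of the hierarchical proxy in Eq.~\eqref{eq:partition_proxy}, $Z_\Phi=e^{\Phi_\tau(B;\mathcal P)/\tau}$. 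It is strictly positive, since each summand is a positive weight times an exponential.

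Next, factor $Z_\Phi$ out of the sum. By the definition of $\pi_C$ in Eq.~\eqref{eq:coarse_child_weights}, each summand equals $Z_\Phi\,\pi_C\,e^{G_\tau(C)/\tau}$. Hence
\begin{equation*}
 e^{F_\tau(B)/\tau}=e^{\Phi_\tau(B;\mathcal P)/\tau}\sum_{C\in\mathcal P(B)}\pi_C e^{G_\tau(C)/\tau}.
\end{equation*}
Take natural logarithms and multiply by $\tau>0$. This gives
\begin{equation*}
 F_\tau(B)=\Phi_\tau(B;\mathcal P)+\tau\log\Big[\sum_C\pi_C e^{G_\tau(C)/\tau}\Big].
\end{equation*}
Finally, subtract $M(B)$ from both sides and use $G_\tau(B)=F_\tau(B)-M(B)$ from Eq.~\eqref{eq:block_quantities}. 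This yields Eq.~\eqref{eq:recursive_gap_decomposition} exactly.

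There is no real obstacle; the only care needed is bookkeeping. The $\pi_C$ are well defined because $Z_\Phi>0$, and they form a probability vector over the children. The argument of the logarithm is at least $1$ because each $G_\tau(C)\geq0$ by Theorem~\ref{thm:jensen_underestimation}. So the residual term is non-negative, which matches Proposition~\ref{prop:proxy_bounds}; I would note this as a sanity check.
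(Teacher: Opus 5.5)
Your proof is correct and matches the paper's argument: substitute $F_\tau(C)=M(C)+G_\tau(C)$ into the partition identity of Lemma~\ref{lem:partition_identity}, factor out $e^{\Phi_\tau(B;\mathcal P)/\tau}$ so the normalized coefficients become $\pi_C$, then take logarithms and subtract $M(B)$. Your remarks that the normalizer is positive and that the residual term is non-negative are valid checks, consistent with Proposition~\ref{prop:proxy_bounds}.
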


\begin{proof}[Proof sketch]
Substitute $F_\tau(C)=M(C)+G_\tau(C)$ into Lemma~\ref{lem:partition_identity} and factor out the common log-sum-exp term formed by child means. The remaining normalized coefficients are exactly $\pi_C$ in Eq.~\eqref{eq:coarse_child_weights}. Subtracting $M(B)$ produces the recursive decomposition. The complete derivation appears in Supplementary Section~\ref{app:proof_partition}.
\end{proof}

Because $\{\pi_C\}$ forms a probability distribution and log-sum-exp lies between its minimum and maximum arguments,
\begin{equation}
 \min_CG_\tau(C)
 \leq G_\tau(B)-\Phi_\tau(B;\mathcal P)+M(B)
 \leq\max_CG_\tau(C).
\label{eq:recursive_gap_bounds}
\end{equation}
Equation~\eqref{eq:recursive_gap_bounds} separates heterogeneity across child blocks from unresolved heterogeneity within children. A one-step refinement score captures only the former.

\subsection{Global Distribution Fidelity and Output Perturbation}
\label{subsec:global_fidelity}

\begin{theorem}[KL equivalence of global free-energy error]
\label{thm:global_kl_equivalence}
Let $p$ be the full attention distribution in Eq.~\eqref{eq:attention_definitions}, and let $q^{\mathcal T}$ be the piecewise-constant approximation in Eq.~\eqref{eq:tree_approximation}. Then
\begin{equation}
 F_\tau(\Omega)-\widetilde F_\tau(\mathcal T)
 =\tau D_{\mathrm{KL}}(q^{\mathcal T}\|p).
\label{eq:global_kl_equivalence}
\end{equation}
\end{theorem}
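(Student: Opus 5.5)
The plan is to expand the relative entropy directly in terms of the two partition functions. The proof then reduces to showing that one cross term vanishes, and this vanishing is exactly where the piecewise-constant structure of $q^{\mathcal T}$ is used. Write $Z=\sum_{j\in\Omega}\exp(s_j/\tau)$ and $\widetilde Z=\sum_{j\in\Omega}\exp(\widetilde s_j/\tau)=\sum_{L\in\mathcal L(\mathcal T)}n_L\exp(M(L)/\tau)$. By Eq.~\eqref{eq:block_quantities} and Eq.~\eqref{eq:tree_approximation},
\begin{equation*}
 F_\tau(\Omega)=\tau\log(Z/N),
 \qquad
 \widetilde F_\tau(\mathcal T)=\tau\log(\widetilde Z/N).
\end{equation*}
Hence $F_\tau(\Omega)-\widetilde F_\tau(\mathcal T)=\tau(\log Z-\log\widetilde Z)$, because the normalizing factors $1/N$ cancel.

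Next, I will compute the log-ratio pointwise. For every $i\in\Omega$,
\begin{equation*}
 \log\frac{q_i^{\mathcal T}}{p_i}=\frac{\widetilde s_i-s_i}{\tau}+\log Z-\log\widetilde Z .
\end{equation*}
Both distributions have full support because all scores are finite, so every term is well defined. Averaging this identity under $q^{\mathcal T}$ and using $\sum_i q_i^{\mathcal T}=1$ gives
\begin{equation*}
 \tau D_{\mathrm{KL}}(q^{\mathcal T}\|p)=\sum_{i\in\Omega}q_i^{\mathcal T}(\widetilde s_i-s_i)+\tau(\log Z-\log\widetilde Z).
\end{equation*}
By the first step, the last term is exactly the global free-energy error.

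The key step, and the only nontrivial point, is that the cross term $\sum_i q_i^{\mathcal T}(\widetilde s_i-s_i)$ vanishes. It helps that the divergence is taken in the direction $q\|p$ rather than $p\|q$. To see the vanishing, group the sum by leaves: the leaf set partitions $\Omega$. On a leaf $L$, the weight $q_i^{\mathcal T}=\exp(M(L)/\tau)/\widetilde Z$ is constant, so the leaf contributes
\begin{equation*}
 \frac{e^{M(L)/\tau}}{\widetilde Z}\sum_{i\in L}\bigl(M(L)-s_i\bigr).
\end{equation*}
This is zero by the definition of $M(L)$ as the leaf mean. Summing over leaves, the cross term is zero and Eq.~\eqref{eq:global_kl_equivalence} follows.

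As a remark, I would note that nonnegativity of the KL divergence recovers $\widetilde F_\tau(\mathcal T)\le F_\tau(\Omega)$. This is consistent with Theorem~\ref{thm:partition_monotonicity} applied with $B=\Omega$, since $\widetilde F_\tau(\mathcal T)=\Phi_\tau(\Omega;\mathcal L(\mathcal T))$. The identity is the global analogue of Corollary~\ref{cor:gap_kl_uniform}, which is the special case of a single leaf, where $q$ is uniform.
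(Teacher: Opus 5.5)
Your proof is correct and follows the same route as the paper: you expand $D_{\mathrm{KL}}(q^{\mathcal T}\|p)$ into the log-ratio of the two partition functions plus a score-replacement term, and that term vanishes leaf by leaf because $q^{\mathcal T}$ is constant on each leaf and $M(L)$ is the exact leaf mean. Your closing remarks, that nonnegativity recovers $\widetilde F_\tau(\mathcal T)\le F_\tau(\Omega)$ and that the single-leaf case reduces to Corollary~\ref{cor:gap_kl_uniform}, are also correct.
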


\begin{proof}[Proof sketch]
Expanding $D_{\mathrm{KL}}(q^{\mathcal T}\|p)$ produces the ratio of partition functions and a score-replacement term within each leaf. The approximate distribution is constant within every leaf, and $M(L)$ is the exact arithmetic mean of the original scores, so the replacement term cancels exactly within each leaf. The remaining term is the difference between the full and approximate log-partition functions. Supplementary Section~\ref{app:proof_global} provides the complete derivation.
\end{proof}

\begin{corollary}[Propagation from local leaf gaps to global error]
\label{cor:local_to_global}
Define the approximate mass assigned to leaf $L$ by
\begin{equation}
 \widetilde\pi_L=
 \frac{n_Le^{M(L)/\tau}}
 {\sum_{R\in\mathcal L(\mathcal T)}n_Re^{M(R)/\tau}}.
\label{eq:leaf_coarse_mass}
\end{equation}
Then
\begin{equation}
 F_\tau(\Omega)-\widetilde F_\tau(\mathcal T)
 =\tau\log\sum_{L\in\mathcal L(\mathcal T)}
 \widetilde\pi_Le^{G_\tau(L)/\tau}.
\label{eq:local_global_decomposition}
\end{equation}
If every leaf satisfies $G_\tau(L)\leq\varepsilon$, then the global free-energy error does not exceed $\varepsilon$.
\end{corollary}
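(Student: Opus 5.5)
The plan is to apply the partition identity of Lemma~\ref{lem:partition_identity} at the global level, using the leaf set $\mathcal L(\mathcal T)$ as a partition of $\Omega$. Because the leaves are pairwise disjoint and cover $\Omega$, the lemma gives
\begin{equation*}
 e^{F_\tau(\Omega)/\tau}=\frac{1}{N}\sum_{L\in\mathcal L(\mathcal T)}n_L\,e^{F_\tau(L)/\tau}.
\end{equation*}
By Eq.~\eqref{eq:tree_approximation}, we also have
\begin{equation*}
 e^{\widetilde F_\tau(\mathcal T)/\tau}=\frac{1}{N}\sum_{L}n_L\,e^{M(L)/\tau}.
\end{equation*}

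Dividing the first expression by the second cancels the factor $1/N$. The next step is to substitute $F_\tau(L)=M(L)+G_\tau(L)$, which follows from the definition of the gap in Eq.~\eqref{eq:block_quantities}. Each numerator term then becomes $n_Le^{M(L)/\tau}e^{G_\tau(L)/\tau}$. Dividing by the common denominator $\sum_R n_Re^{M(R)/\tau}$ turns the coefficient of each term into exactly $\widetilde\pi_L$ from Eq.~\eqref{eq:leaf_coarse_mass}. Taking logarithms and multiplying by $\tau>0$ then yields Eq.~\eqref{eq:local_global_decomposition}.

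This is the same manipulation as in Theorem~\ref{thm:recursive_residual}, with $B=\Omega$ and $\mathcal P=\mathcal L(\mathcal T)$. In that notation, $\Phi_\tau(\Omega;\mathcal L(\mathcal T))=\widetilde F_\tau(\mathcal T)$. One could alternatively cite that theorem directly, after checking that its $\pi_C$ coincides with $\widetilde\pi_L$, since the factor $1/n_B=1/N$ cancels.

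For the bound, observe that the $\widetilde\pi_L$ are positive and sum to one. If every leaf satisfies $G_\tau(L)\le\varepsilon$, then
\begin{equation*}
 \sum_L\widetilde\pi_Le^{G_\tau(L)/\tau}\le e^{\varepsilon/\tau}.
\end{equation*}
Because the logarithm is monotone and $\tau>0$, the right-hand side of Eq.~\eqref{eq:local_global_decomposition} is at most $\varepsilon$. The same averaging argument, together with $G_\tau(L)\ge0$ from Theorem~\ref{thm:jensen_underestimation}, also shows the error is nonnegative. This agrees with Theorem~\ref{thm:global_kl_equivalence}, though it is not needed here.

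No real obstacle is expected. The only point needing care is the bookkeeping of the normalizations: $F_\tau(L)$ is normalized by $n_L$, while the global quantities are normalized by $N$. The weights $n_L/N$ must therefore be tracked so that the $1/N$ factors cancel cleanly in the ratio. This is exactly the content of the identity above.
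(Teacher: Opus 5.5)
Your proposal is correct and follows the paper's own argument. The paper likewise groups the full partition function by leaves, which is the partition identity with $\mathcal P=\mathcal L(\mathcal T)$, then substitutes $F_\tau(L)=M(L)+G_\tau(L)$, reads the ratio of partition functions as a $\widetilde\pi_L$-weighted average of $e^{G_\tau(L)/\tau}$, and bounds that average by $e^{\varepsilon/\tau}$. Your side remark that this is Theorem~\ref{thm:recursive_residual} with $B=\Omega$ is also accurate, since $\Phi_\tau(\Omega;\mathcal L(\mathcal T))=\widetilde F_\tau(\mathcal T)$ and the $1/N$ factors cancel in $\pi_C$.
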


\begin{proof}[Proof sketch]
Group the full partition function by leaves and substitute $F_\tau(L)=M(L)+G_\tau(L)$. The ratio between the full and approximate partition functions becomes a $\widetilde\pi_L$-weighted average of $e^{G_\tau(L)/\tau}$. If every leaf gap is at most $\varepsilon$, the weighted average is at most $e^{\varepsilon/\tau}$. Supplementary Section~\ref{app:proof_global} gives the complete proof.
\end{proof}

Corollary~\ref{cor:local_to_global} controls the exact leaf gap, not a one-step child proxy. The current BMFA refinement score is a lower bound on the exact quantity and cannot by itself imply the certified conclusion without an additional upper bound.

\begin{corollary}[Attention-output perturbation bound]
\label{cor:attention_output_bound}
Assume $\|\bm v_i\|_2\leq V_{\max}$, and define $E_\tau(\mathcal T)=F_\tau(\Omega)-\widetilde F_\tau(\mathcal T)$. Then
\begin{equation}
 \|\bm o-\widetilde{\bm o}\|_2
 \leq V_{\max}\sqrt{\frac{2E_\tau(\mathcal T)}{\tau}},
 \qquad
 \widetilde{\bm o}=\sum_iq_i^{\mathcal T}\bm v_i.
\label{eq:attention_output_bound}
\end{equation}
\end{corollary}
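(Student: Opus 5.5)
The bound follows from Theorem~\ref{thm:global_kl_equivalence} combined with Pinsker's inequality and the elementary fact that a difference of two convex combinations of bounded vectors is controlled by total variation. The plan has three steps.

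First, I would write the output difference as a single sum over tokens,
\begin{equation}
 \bm o-\widetilde{\bm o}=\sum_{i\in\Omega}(p_i-q_i^{\mathcal T})\bm v_i .
\end{equation}
The triangle inequality together with $\|\bm v_i\|_2\leq V_{\max}$ gives
\begin{equation}
 \|\bm o-\widetilde{\bm o}\|_2\leq V_{\max}\|p-q^{\mathcal T}\|_1 .
\end{equation}
One could sharpen this to $2V_{\max}\,\mathrm{TV}$ after centering the values. However, since $\|p-q\|_1=2\,\mathrm{TV}(p,q)$, the plain $\ell_1$ form is exactly what is needed. Second, I would apply Pinsker's inequality in the direction matching Theorem~\ref{thm:global_kl_equivalence}:
\begin{equation}
 \|p-q^{\mathcal T}\|_1\leq\sqrt{2D_{\mathrm{KL}}(q^{\mathcal T}\|p)} .
\end{equation}
Pinsker holds for either argument order, so using $D_{\mathrm{KL}}(q\|p)$ is legitimate. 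Both $p$ and $q^{\mathcal T}$ have full support on $\Omega$ because all scores are finite, so the divergence is finite. Third, I would substitute $D_{\mathrm{KL}}(q^{\mathcal T}\|p)=E_\tau(\mathcal T)/\tau$ from Eq.~\eqref{eq:global_kl_equivalence}, which yields Eq.~\eqref{eq:attention_output_bound} directly.

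There is no real obstacle here. The only point needing care is the constant: the form of Pinsker must be the one with $\sqrt{2D}$ for the $\ell_1$ norm, not $\sqrt{D/2}$, which is the total-variation form. With the $\ell_1$ version the constants match the statement exactly; the total-variation form would give a bound sharper by a factor of $2$, and the stated bound remains valid since it is weaker. I would also note in passing that $E_\tau(\mathcal T)\geq 0$, which follows from the KL identity, so the square root is well defined.
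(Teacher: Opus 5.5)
Your proof is correct and follows the paper's own route: the triangle inequality gives $\|\bm o-\widetilde{\bm o}\|_2\leq V_{\max}\|p-q^{\mathcal T}\|_1$, Pinsker's inequality (valid in either argument order) bounds this by $V_{\max}\sqrt{2D_{\mathrm{KL}}(q^{\mathcal T}\|p)}$, and Theorem~\ref{thm:global_kl_equivalence} replaces the divergence by $E_\tau(\mathcal T)/\tau$. Two of your asides are mistaken but unused: since $\|p-q\|_1=2\,\mathrm{TV}(p,q)$, the forms $\mathrm{TV}\leq\sqrt{D/2}$ and $\|p-q\|_1\leq\sqrt{2D}$ are the same inequality, and $2V_{\max}\,\mathrm{TV}$ equals $V_{\max}\|p-q\|_1$, so neither the total-variation form nor that ``centering'' step gives anything sharper, let alone a factor of $2$.
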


\begin{proof}[Proof sketch]
The value-norm bound and the triangle inequality first give $\|\bm o-\widetilde{\bm o}\|_2\leq V_{\max}\|p-q^{\mathcal T}\|_1$. Pinsker's inequality \citep{sason2015reversepinsker}, followed by Theorem~\ref{thm:global_kl_equivalence}, gives Eq.~\eqref{eq:attention_output_bound}. The complete proof appears in Supplementary Section~\ref{app:proof_global}.
\end{proof}

Corollary~\ref{cor:attention_output_bound} establishes a rigorous connection between free-energy preservation, attention-distribution preservation, and single-layer attention-output stability. Classification accuracy additionally depends on subsequent nonlinear mappings and classification margins; monotonic Top-1 behavior does not follow automatically from the bound.

\subsection{Incompleteness of One-Step Screening and Certified Corrections}
\label{subsec:criterion_limitations}

\begin{proposition}[Incompleteness of the one-step refinement score]
\label{prop:one_step_counterexample}
There exists a non-constant block $B$ and a four-child partition $\mathcal P_1(B)$ such that $\Phi_\tau(B;\mathcal P_1)-M(B)=0$ while $G_\tau(B)>0$.
\end{proposition}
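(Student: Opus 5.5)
The plan is an explicit construction in which every child block has the same mean but nonzero internal variation. By the equality case of Jensen's inequality (the argument in Proposition~\ref{prop:proxy_bounds}), $\Phi_\tau(B;\mathcal P_1)=M(B)$ holds exactly when all child means $M(C)$, $C\in\mathcal P_1(B)$, coincide. For example, when all child weights are equal, $\Phi_\tau$ is $\tau\log$ of an average of $e^{M(C)/\tau}$, which collapses to $M(B)$ whenever the $M(C)$ are equal. Theorem~\ref{thm:jensen_underestimation} gives $G_\tau(B)>0$ for every non-constant block. So it suffices to build a non-constant rectangular block whose quadtree children all share a common mean.

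Concretely, take $B$ to be a $4\times4$ block of tokens with the standard quadtree partition $\mathcal P_1(B)$ into four $2\times2$ children $C_1,\dots,C_4$. Inside each child, place one token with score $\Delta>0$ and three tokens with score $0$. This is a boundary-minority pattern replicated in every quadrant. Then $M(C_r)=\Delta/4$ for every $r$, hence $M(B)=\Delta/4$, and
\[
\Phi_\tau(B;\mathcal P_1)=\tau\log\!\Bigl[\textstyle\sum_{r=1}^{4}\tfrac14 e^{\Delta/(4\tau)}\Bigr]=\Delta/4=M(B).
\]
The block is non-constant, so Theorem~\ref{thm:jensen_underestimation} yields $G_\tau(B)>0$. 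For concreteness, Lemma~\ref{lem:binary_exact} with $\alpha=1/4$ gives the closed form $G_\tau(B)=\tau\log\!\bigl[\tfrac34+\tfrac14e^{\Delta/\tau}\bigr]-\Delta/4$. Taking $\Delta=\tau\log 3+c\tau$ also shows that the minority can carry Gibbs mass $e^c/(1+e^c)$ (Proposition~\ref{prop:minority_threshold}) while the one-step score is exactly zero.

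There is no real obstacle here, only a point to check carefully. The construction must respect the paper's notion of a quadtree split into rectangular children, so the children must be genuine equal-size rectangles. The $4\times4$ block guarantees this. Alternatively, one could take a $2\times2$ block split into singletons, but then $\Phi_\tau=F_\tau$, which defeats the purpose; the children must therefore contain at least two tokens each. As a consistency check, Theorem~\ref{thm:recursive_residual} applies here: all $\pi_C=1/4$ and all $G_\tau(C)$ are equal, so the entire gap $G_\tau(B)$ sits in the within-child residual term of Eq.~\eqref{eq:recursive_gap_decomposition}, which the one-step score cannot see.
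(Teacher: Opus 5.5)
Your construction is correct and uses the same idea as the paper: choose four children that all share one common mean but are each internally non-constant, so that $\Phi_\tau(B;\mathcal P_1)=M(B)$ while $G_\tau(B)>0$. The paper puts equal numbers of $+a$ and $-a$ in every child, so all means are zero and $G_\tau(B)=\tau\log\cosh(a/\tau)$. Your replicated one-in-four minority gives a common mean of $\Delta/4$ instead, which is only a cosmetic difference, though it has the small bonus of showing that the heterogeneity the one-step score misses can carry Gibbs mass $e^c/(1+e^c)$.
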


\begin{proof}
Let every child contain equal numbers of $+a$ and $-a$, where $a>0$. Every child mean and the parent mean equal zero, so Eq.~\eqref{eq:partition_proxy} yields $\Phi_\tau(B;\mathcal P_1)=0=M(B)$. The empirical distribution of the entire block also assigns equal probability to $+a$ and $-a$, and therefore
\begin{equation}
 G_\tau(B)
 =\tau\log\!\left[\frac{e^{a/\tau}+e^{-a/\tau}}{2}\right]
 =\tau\log\cosh(a/\tau)>0.
\label{eq:one_step_counterexample}
\end{equation}
A one-step child-mean score consequently fails to reveal within-child heterogeneity when means cancel exactly at the current partition scale.
\end{proof}

Proposition~\ref{prop:one_step_counterexample} requires the one-step quantity to be described as a \emph{lower-bound refinement signal}, rather than an unconditional complete error certificate.

\begin{theorem}[Monotone consistency of multi-level look-ahead lower bounds]
\label{thm:lookahead_consistency}
Let $\mathcal P_h(B)$ be the descendant partition obtained by regular subdivision of $B$ for $h$ levels, and define $\widehat G_\tau^{(h)}(B)=\Phi_\tau(B;\mathcal P_h)-M(B)$. Then
\begin{equation}
 0=\widehat G_\tau^{(0)}(B)
 \leq\widehat G_\tau^{(1)}(B)
 \leq\cdots\leq G_\tau(B).
\label{eq:lookahead_monotonicity}
\end{equation}
If $h$ is sufficiently large that every descendant is a single token, then $\widehat G_\tau^{(h)}(B)=G_\tau(B)$.
\end{theorem}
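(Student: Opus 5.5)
The plan is to obtain the statement directly from Theorem~\ref{thm:partition_monotonicity} and Proposition~\ref{prop:proxy_bounds}. The key structural observation concerns regular subdivision. Each block of $\mathcal P_{h+1}(B)$ arises by subdividing a block of $\mathcal P_h(B)$, so it is contained in a block of $\mathcal P_h(B)$. Hence $\mathcal P_{h+1}(B)$ refines $\mathcal P_h(B)$ in the sense of Theorem~\ref{thm:partition_monotonicity}. Blocks of size one that cannot be split further remain unchanged, so $\mathcal P_{h+1}$ is still a valid refinement. Subdivision also preserves disjointness and union by induction on $h$, so each $\mathcal P_h(B)$ is a genuine partition of $B$.

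With this in hand, the chain follows in four steps. First, $\mathcal P_0(B)=\{B\}$, and by the remark after Eq.~\eqref{eq:partition_proxy} this gives $\Phi_\tau(B;\mathcal P_0)=M(B)$, so $\widehat G_\tau^{(0)}(B)=0$. Second, for each $h\ge 0$, Theorem~\ref{thm:partition_monotonicity} with $\mathcal P=\mathcal P_h$ and $\mathcal Q=\mathcal P_{h+1}$ gives $\Phi_\tau(B;\mathcal P_h)\le\Phi_\tau(B;\mathcal P_{h+1})\le F_\tau(B)$. Third, subtracting the fixed quantity $M(B)$ preserves these inequalities, so $\widehat G_\tau^{(h)}\le\widehat G_\tau^{(h+1)}\le F_\tau(B)-M(B)=G_\tau(B)$. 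Proposition~\ref{prop:proxy_bounds} can be cited as an independent check of the upper bound at every level.

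For the equality clause, suppose every element of $\mathcal P_h(B)$ is a singleton $\{i\}$. Then $n_C/n_B=1/n_B$ and $M(C)=s_i$, so Eq.~\eqref{eq:partition_proxy} gives $\Phi_\tau=\tau\log(n_B^{-1}\sum_i e^{s_i/\tau})=F_\tau(B)$. Equivalently, this is the equality case of Theorem~\ref{thm:partition_monotonicity}. It follows that $\widehat G_\tau^{(h)}(B)=G_\tau(B)$. Such an $h$ exists because $B$ is a finite rectangle: each side length strictly decreases under a split until it reaches $1$, so $h\approx\lceil\log_2\max(\text{side lengths})\rceil$ suffices.

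The only point needing care is the refinement property of unequal or degenerate splits, such as odd side lengths or children with one dimension equal to $1$. The quadtree of Section~\ref{subsec:hierarchical_partition} allows \emph{at most} four children, which covers these cases. I would state explicitly that each child is a nonempty sub-rectangle of its parent, so the containment hypothesis of Theorem~\ref{thm:partition_monotonicity} holds. Everything else is direct substitution into results already established.
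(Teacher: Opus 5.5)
Your proposal is correct and follows the paper's own route. Consecutive regular subdivisions $\mathcal P_h(B)\subseteq$-refine into $\mathcal P_{h+1}(B)$, so Theorem~\ref{thm:partition_monotonicity} gives both the monotone chain and the upper bound $F_\tau(B)$, while the trivial partition $\mathcal P_0(B)=\{B\}$ and the single-token partition supply the two endpoint equalities. Your extra checks on degenerate splits and on the existence of a sufficiently large $h$ are harmless additions.
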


\begin{proof}[Proof sketch]
Adjacent look-ahead partitions form a sequence of refinements. Theorem~\ref{thm:partition_monotonicity} therefore guarantees that the proxy free energy is non-decreasing and bounded above by the exact free energy. The proxy equals the exact value at the single-token partition. Supplementary Section~\ref{app:proof_certification} provides the full proof.
\end{proof}

Multi-level look-ahead reduces cancellation at the current partition scale but increases statistical access and tree-construction cost. The experiments use the low-cost setting $h=1$; deeper look-ahead is a testable extension rather than an implicit component of the reported implementation.

\begin{theorem}[Range-based upper bound on the free-energy gap]
\label{thm:range_upper_bound}
Define the within-block range $\mathcal R(B)=\max_{i\in B}s_i-\min_{i\in B}s_i$. Then
\begin{equation}
 G_\tau(B)\leq\frac{\mathcal R(B)^2}{8\tau}.
\label{eq:range_upper_bound}
\end{equation}
\end{theorem}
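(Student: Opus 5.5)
This is Hoeffding's lemma applied to the uniform empirical score distribution of the block. Following the convention of Section~\ref{subsec:attention_setting}, let $S_B=s_{U_B}$ and $X_B=S_B-M(B)$. By the translation invariance in Proposition~\ref{prop:temperature_properties}, $G_\tau(B)=\tau\log\mathbb E[e^{X_B/\tau}]$. The centered variable $X_B$ has zero mean and takes values in the interval $[a,b]$ with $a=\min_i s_i-M(B)\le 0\le b=\max_i s_i-M(B)$, so $b-a=\mathcal R(B)$. The claim is then equivalent to
\begin{equation*}
 \log\mathbb E\!\left[e^{tX_B}\right]\le\frac{t^2(b-a)^2}{8},\qquad t=1/\tau,
\end{equation*}
because multiplying by $\tau$ gives $\tau\cdot\mathcal R(B)^2/(8\tau^2)=\mathcal R(B)^2/(8\tau)$.

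To prove Hoeffding's bound I will use the standard convexity argument. If $a=b$ the block is constant, $G_\tau(B)=0$ by Theorem~\ref{thm:jensen_underestimation}, and there is nothing to show, so assume $a<b$. Convexity of $x\mapsto e^{tx}$ on $[a,b]$ gives
\begin{equation*}
 e^{tx}\le\frac{b-x}{b-a}e^{ta}+\frac{x-a}{b-a}e^{tb}.
\end{equation*}
Taking expectations and using $\mathbb E X_B=0$ yields $\mathbb E e^{tX_B}\le\frac{b}{b-a}e^{ta}-\frac{a}{b-a}e^{tb}$. Set $\theta=-a/(b-a)\in[0,1]$ and $u=t(b-a)$. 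The logarithm of the right-hand side becomes
\begin{equation*}
 L(u)=-\theta u+\log\!\left(1-\theta+\theta e^{u}\right).
\end{equation*}

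The main step is showing $L(u)\le u^2/8$. I will Taylor-expand $L$ around $u=0$ with Lagrange remainder. At the origin, $L(0)=0$ and $L'(0)=0$. The second derivative is $L''(u)=\rho(1-\rho)$, where $\rho=\theta e^u/(1-\theta+\theta e^u)\in[0,1]$. This $\rho$ has exactly the form of the tilted minority mass $\rho_\tau$ from Lemma~\ref{lem:binary_exact}, which is a pleasant link to the binary model. Since $\rho(1-\rho)\le 1/4$, the remainder gives $L(u)\le u^2/8$. Substituting $u=\mathcal R(B)/\tau$ and multiplying by $\tau$ completes the argument.

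I expect no real obstacle beyond verifying $L''=\rho(1-\rho)$ and keeping the signs of $a$ and $b$ straight. The second-derivative calculation is a routine quotient-rule computation, which I would present compactly.
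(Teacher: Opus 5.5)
Your proof is correct and follows the paper's route: apply Hoeffding's lemma to the centered block score $X_B=S_B-M(B)$, whose support has length $\mathcal R(B)$, with parameter $t=1/\tau$, then multiply by $\tau$. The only difference is that you prove Hoeffding's lemma yourself via the chord bound and the Taylor argument; the algebra checks out, including $L(0)=L'(0)=0$ and $L''=\rho(1-\rho)\le 1/4$, whereas the paper simply cites the lemma.
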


\begin{proof}[Proof sketch]
Apply Hoeffding's lemma \citep{hoeffding1963probability} to the centered variable $X_B=S_B-M(B)$, whose support interval has length $\mathcal R(B)$. Setting the moment-generating-function parameter to $1/\tau$ and multiplying by $\tau$ gives the result. Supplementary Section~\ref{app:proof_certification} contains the complete proof.
\end{proof}

Theorem~\ref{thm:range_upper_bound} and the multi-level lower bound yield $\widehat G_\tau^{(h)}(B)\leq G_\tau(B)\leq\mathcal R(B)^2/(8\tau)$. A block is certified safe when the upper bound is at most $\varepsilon$, and refinement is necessary when the lower bound exceeds $\varepsilon$. If the lower bound does not exceed the threshold while the upper bound does, the available statistics are insufficient for a certified decision. The range bound requires access to within-block extrema. Such access is available in the diagnostic prototype after full score computation, but a future implementation designed to avoid QK computation would require a low-cost proxy or a precomputed bound.

\begin{proposition}[Stability under approximate scores]
\label{prop:score_stability}
Assume that exact and approximate scores satisfy $\max_{i\in B}|s_i-\widehat s_i|\leq\delta$. Let $G_\tau(B;s)$ and $G_\tau(B;\widehat s)$ denote the free-energy gaps computed from the two score sets. Then
\begin{equation}
 |G_\tau(B;s)-G_\tau(B;\widehat s)|\leq2\delta.
\label{eq:score_stability}
\end{equation}
\end{proposition}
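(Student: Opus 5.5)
The plan is to split the gap into its two components, $G_\tau(B;s)=F_\tau(B;s)-M(B;s)$, and show that each of them is $1$-Lipschitz in the sup norm over $B$. The bound $2\delta$ then follows from the triangle inequality:
\begin{equation*}
 |G_\tau(B;s)-G_\tau(B;\widehat s)|
 \leq|F_\tau(B;s)-F_\tau(B;\widehat s)|+|M(B;s)-M(B;\widehat s)|.
\end{equation*}

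The mean term is immediate. Since $M(B;s)-M(B;\widehat s)=n_B^{-1}\sum_{i\in B}(s_i-\widehat s_i)$, its absolute value is at most $n_B^{-1}\sum_{i\in B}|s_i-\widehat s_i|\leq\delta$.

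The free-energy term uses a monotonicity and translation argument rather than a derivative computation. From $\widehat s_i-\delta\leq s_i\leq\widehat s_i+\delta$ and the fact that $x\mapsto e^{x/\tau}$ is increasing, we get
\begin{equation*}
 e^{-\delta/\tau}\,\frac{1}{n_B}\sum_{i\in B}e^{\widehat s_i/\tau}
 \leq\frac{1}{n_B}\sum_{i\in B}e^{s_i/\tau}
 \leq e^{\delta/\tau}\,\frac{1}{n_B}\sum_{i\in B}e^{\widehat s_i/\tau}.
\end{equation*}
Taking logarithms and multiplying by $\tau>0$ gives $F_\tau(B;\widehat s)-\delta\leq F_\tau(B;s)\leq F_\tau(B;\widehat s)+\delta$. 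This is exactly the translation covariance already used in Proposition~\ref{prop:temperature_properties}, combined with the monotonicity of $F_\tau$ in each score.

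Adding the two bounds gives Eq.~\eqref{eq:score_stability}. An alternative route applies the mean value theorem to $G_\tau$ as a function of the score vector. Its gradient is $p^B-u^B$, which has $\ell_1$ norm at most $2$, and this reproduces the same constant. I would mention this only as a remark.

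The only step needing any care is making the free-energy Lipschitz bound independent of $\tau$. The scaling $F_\tau=\tau\log(\cdot)$ with exponent $s/\tau$ exactly cancels the temperature factor, and the sandwich argument above handles this cleanly. Beyond that, the argument is routine.
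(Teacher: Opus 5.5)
Your proposal is correct and follows essentially the same route as the paper, which also splits $G_\tau=F_\tau-M$, bounds the mean difference by $\delta$, and bounds the free-energy difference by $\delta$ using the $1$-Lipschitz property of log-sum-exp in $\ell_\infty$. The paper's supplementary proof of that free-energy bound is exactly your termwise exponential sandwich, and the $2\delta$ bound then comes from the same triangle inequality. Your gradient remark ($\nabla G_\tau=p^B-u^B$ with $\|p^B-u^B\|_1\le 2$) is a valid alternative but is not needed.
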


\begin{proof}[Proof sketch]
Log-sum-exp is $1$-Lipschitz with respect to the $\ell_\infty$ norm, so the corresponding free energies differ by at most $\delta$. Arithmetic means also differ by at most $\delta$. Applying the triangle inequality to the difference of the two terms gives the $2\delta$ bound. A termwise exponential sandwich proof appears in Supplementary Section~\ref{app:proof_certification}.
\end{proof}

A future screening criterion based on predicted scores must preserve a safety margin of at least $2\delta$ on both sides of the threshold. Unquantified proxy error cannot be ignored.

\section{Adaptive Multi-Scale Screening from Free-Energy Lower Bounds}
\label{sec:methodology}

\subsection{Overview and Design Principles}
\label{subsec:method_overview}

BMFA constructs a piecewise-constant approximation over the two-dimensional spatial layout of attention scores. The design follows three principles. First, the refinement quantity must align with exponential softmax aggregation rather than relying only on a linear mean or an external edge prior. Second, spatial granularity must adapt to local heterogeneity, retaining coarse leaves in homogeneous regions and allocating the leaf budget to regions with unresolved response differences. Third, the threshold and maximum depth must explicitly control tree size, while the theoretical presentation must distinguish refinement triggered by a lower bound from safety certified by an upper bound.

The current formulation constructs an independent tree for each fixed query-head pair. The choice requires no cross-head consistency assumption and exactly matches the results in Sections~\ref{sec:preliminaries}--\ref{sec:theoretical_motivation}. An implementation that shares a tree can compute refinement scores for all participating query-head pairs and aggregate them by a maximum. The conservative maximum guarantees refinement whenever any participating pair triggers a split, although the resulting computation and budget must be reported separately.

Figure~\ref{fig:bmfa_pipeline} connects the local BMFA decision to real inputs and closed-loop output fidelity. The response field provides a computable representation of within-block heterogeneity, and the hierarchical lower-bound increment in Eq.~\eqref{eq:bmfa_refinement_score} determines whether a quadtree node is subdivided. Large leaves remain in homogeneous regions, while additional granularity is concentrated around high-response boundaries. The rightmost panel uses measured closed-loop results from the last four DeiT-Tiny blocks on COCO 5k. At comparable leaf ratios, adaptive selection achieves higher Top-1 agreement than random selection.

\begin{figure*}[t]
  \centering
  \includegraphics[width=0.98\textwidth]{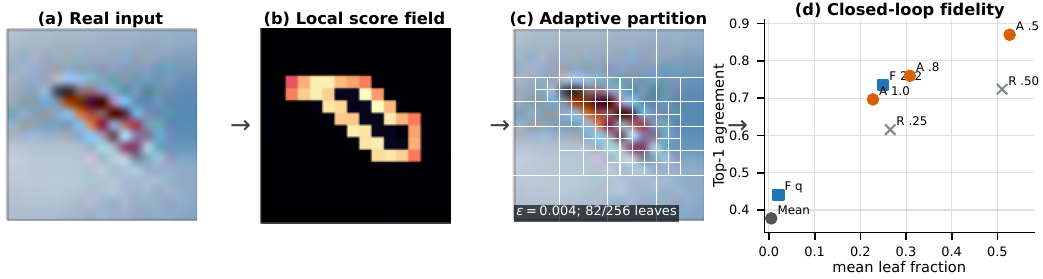}
  \caption{Data-driven overview of BMFA. (a) A kite crop from COCO. (b) The corresponding local response field. (c) Adaptive leaf partition recursively constructed from $\Gamma_1(B)$; the illustrated threshold is $\varepsilon=0.004$, producing $82/256$ leaves. (d) Leaf ratio and Top-1 agreement with the full model in the COCO 5k closed-loop attention experiment. A, R, and F denote adaptive, random, and fixed strategies. Adaptive selection consistently exceeds random retention at comparable granularity, indicating that the gain arises from spatial selection rather than retention quantity alone. Panel (c) is reconstructed from the measured response grid in panel (b) using Algorithm~\ref{alg:bmfa}; panel (d) uses the recorded experimental results.}
  \label{fig:bmfa_pipeline}
\end{figure*}

\subsection{One-Step Hierarchical Free-Energy Refinement Score}
\label{subsec:refinement_score}

Let $\operatorname{ch}(B)$ denote the valid direct children of a splittable block $B$. The one-step BMFA refinement score is
\begin{align}
 \Gamma_\tau(B)
 &=\Phi_\tau(B;\operatorname{ch}(B))-M(B)\notag\\
 &=\tau\log\!\left[
 \sum_{C\in\operatorname{ch}(B)}
 \frac{n_C}{n_B}e^{M(C)/\tau}
 \right]-M(B).
\label{eq:bmfa_refinement_score}
\end{align}
Proposition~\ref{prop:proxy_bounds} guarantees $0\leq\Gamma_\tau(B)\leq G_\tau(B)$. Consequently, $\Gamma_\tau(B)>\varepsilon$ is sufficient evidence that the current coarse representation incurs an error exceeding $\varepsilon$. By contrast, $\Gamma_\tau(B)\leq\varepsilon$ establishes only that heterogeneity across direct children is small; deeper mean cancellation of the form in Proposition~\ref{prop:one_step_counterexample} remains possible. We therefore refer to $\Gamma_\tau$ as a hierarchical free-energy refinement score rather than an unconditional complete error certificate.

Equation~\eqref{eq:bmfa_refinement_score} depends only on the parent mean and direct-child means. Once a score grid is available, a two-dimensional prefix sum permits constant-time mean queries for arbitrary rectangular blocks, making the decision cost for each tree node independent of block area.

\subsection{Recursive Quadtree Construction}
\label{subsec:recursive_algorithm}

Given a threshold $\varepsilon\geq0$ and maximum depth $D$, BMFA processes the root block $\Omega$ recursively. A node becomes a leaf when the maximum depth is reached, the node contains a single token, or no valid subdivision exists. Every remaining node is split when $\Gamma_\tau(B)>\varepsilon$ and retained otherwise.

\begin{algorithm}[t]
\caption{Boundary-Minority Free-Energy Adaptive Screening}
\label{alg:bmfa}
\KwIn{Score grid $\{s_i\}_{i\in\Omega}$, temperature $\tau$, threshold $\varepsilon$, maximum depth $D$}
\KwOut{Leaf partition $\mathcal L(\mathcal T)$}
Initialize $\mathcal Q\leftarrow\{(\Omega,0)\}$ and $\mathcal L(\mathcal T)\leftarrow\varnothing$\;
\While{$\mathcal Q$ is not empty}{
  Pop a node $(B,d)$\;
  \eIf{$d=D$ or $B$ admits no valid split}{
    $\mathcal L(\mathcal T)\leftarrow\mathcal L(\mathcal T)\cup\{B\}$\;
  }{
    Construct $\operatorname{ch}(B)$ and compute $\Gamma_\tau(B)$ from Eq.~\eqref{eq:bmfa_refinement_score}\;
    \eIf{$\Gamma_\tau(B)>\varepsilon$}{
      Add $(C,d+1)$ to $\mathcal Q$ for every $C\in\operatorname{ch}(B)$\;
    }{
      $\mathcal L(\mathcal T)\leftarrow\mathcal L(\mathcal T)\cup\{B\}$\;
    }
  }
}
\end{algorithm}

Using the strict condition ``$>$'' makes the behavior at equality deterministic. Decreasing the threshold cannot reduce the number of leaves, and increasing the maximum depth expands the finest attainable resolution. A node that still has a large $\Gamma_\tau(B)$ at depth $D$ must be marked as depth-limited; the current method does not claim that such a node satisfies the exact error threshold.

\subsection{Piecewise-Constant Attention Reconstruction and Multiplicity Preservation}
\label{subsec:attention_reconstruction}

After tree construction, every score in a leaf is replaced by the leaf mean, yielding $q^{\mathcal T}$ in Eq.~\eqref{eq:tree_approximation}. The operation retains the original token count and isolates the effects of spatial selection and output fidelity. A future sparse implementation that compresses each leaf into a single representative token must preserve the number of original tokens contained in the leaf.

\begin{proposition}[Equivalence of repeated-mean and single-representative forms]
\label{prop:multiplicity_equivalence}
For each leaf $L$, define the representative logit and value as
\begin{equation}
 \ell_L=M(L)+\tau\log n_L,
 \qquad
 \overline{\bm v}_L=\frac{1}{n_L}\sum_{i\in L}\bm v_i.
\label{eq:representative_token}
\end{equation}
Computing a softmax-weighted sum over the representative pairs $(\ell_L,\overline{\bm v}_L)$ yields exactly the same attention output as repeating the mean score $M(L)$ at every original position $i\in L$.
\end{proposition}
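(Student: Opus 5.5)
The plan is to compute both attention outputs in closed form and show they coincide, by comparing unnormalized numerators and the shared partition function.

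\emph{Repeated-mean form.} Every $i\in L$ carries the score $\widetilde s_i=M(L)$. Grouping the sum over $\Omega$ by leaves, the normalizer is
\begin{equation*}
 \widetilde Z=\sum_{L\in\mathcal L(\mathcal T)} n_L e^{M(L)/\tau},
\end{equation*}
which is the same quantity that appears inside $\widetilde F_\tau(\mathcal T)$ in Eq.~\eqref{eq:tree_approximation}. The output $\widetilde{\bm o}=\sum_i q_i^{\mathcal T}\bm v_i$ then becomes
\begin{equation*}
 \widetilde{\bm o}=\widetilde Z^{-1}\sum_L e^{M(L)/\tau}\sum_{i\in L}\bm v_i,
\end{equation*}
because the weight is constant within each leaf and can be pulled out of the inner sum.

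\emph{Single-representative form.} From Eq.~\eqref{eq:representative_token}, $e^{\ell_L/\tau}=e^{M(L)/\tau}\,e^{\log n_L}=n_L e^{M(L)/\tau}$. The softmax normalizer over representatives, $\sum_L e^{\ell_L/\tau}$, is therefore exactly $\widetilde Z$. The numerator is
\begin{equation*}
 \sum_L n_L e^{M(L)/\tau}\,\overline{\bm v}_L=\sum_L e^{M(L)/\tau}\sum_{i\in L}\bm v_i,
\end{equation*}
since the factor $n_L$ cancels the $1/n_L$ in $\overline{\bm v}_L$. Both numerator and denominator agree with those of the repeated-mean form, so the two outputs are equal.

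No step is a real obstacle; the argument is pure bookkeeping. The one point that needs care is the temperature convention. The factor $\tau\log n_L$ in the logit must become $\log n_L$ after division by $\tau$ inside the softmax, so that it contributes the multiplicity $n_L$ exactly. I will also note that the leaf-level masses $e^{\ell_L/\tau}/\widetilde Z$ equal $\widetilde\pi_L$ from Eq.~\eqref{eq:leaf_coarse_mass}, which ties the representative form to Corollary~\ref{cor:local_to_global}. Finally, I will remark that omitting the $\tau\log n_L$ offset would reweight leaves uniformly rather than by size, so multiplicity preservation is necessary for the equivalence.
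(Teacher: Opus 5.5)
Your proposal is correct and follows the same route as the paper: both arguments compute the leaf contribution $n_L e^{M(L)/\tau}$ to the partition function and $n_L e^{M(L)/\tau}\overline{\bm v}_L$ to the numerator under each form, using $e^{\ell_L/\tau}=n_L e^{M(L)/\tau}$. Your added observations, that the representative masses equal $\widetilde\pi_L$ and that dropping $\tau\log n_L$ would weight leaves equally regardless of size, are accurate extras and do not change the argument.
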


\begin{proof}
In the repeated-mean representation, leaf $L$ contributes $n_Le^{M(L)/\tau}$ to the approximate partition function and contributes
$e^{M(L)/\tau}\sum_{i\in L}\bm v_i=n_Le^{M(L)/\tau}\overline{\bm v}_L$ to the attention numerator. The representative form satisfies $e^{\ell_L/\tau}=e^{M(L)/\tau}e^{\log n_L}=n_Le^{M(L)/\tau}$. The partition-function and numerator contributions therefore match exactly. Summation over all leaves produces identical normalized outputs.
\end{proof}

Omitting $\tau\log n_L$ assigns the same prior multiplicity to leaves of different area and violates Proposition~\ref{prop:multiplicity_equivalence}. The current closed-loop prototype replaces scores on the original grid, so repeated tokens preserve multiplicity automatically. A future single-representative kernel must add the term explicitly.

\subsection{Structural Properties of Threshold, Depth, and Tree Size}
\label{subsec:method_properties}

\begin{proposition}[Threshold nesting]
\label{prop:threshold_nesting}
For fixed input, temperature, and maximum depth, if $0\leq\varepsilon_1\leq\varepsilon_2$, the tree generated with $\varepsilon_1$ refines the tree generated with $\varepsilon_2$. Hence
$|\mathcal L(\mathcal T_{\varepsilon_1})|\geq|\mathcal L(\mathcal T_{\varepsilon_2})|$.
\end{proposition}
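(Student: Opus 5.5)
The plan is to compare the two recursive constructions node by node. The key observation is that, for a fixed input grid, temperature and depth cap, the children $\operatorname{ch}(B)$ and the score $\Gamma_\tau(B)$ of a block depend only on $B$ and the scores. They do not depend on $\varepsilon$ or on the order in which Algorithm~\ref{alg:bmfa} pops nodes. Each tree $\mathcal T_\varepsilon$ can therefore be described as a set of visited nodes $(B,d)$ inside one common maximal quadtree $\mathcal T^\ast$. This $\mathcal T^\ast$ is the tree obtained by always splitting, subject only to the depth limit $D$ and to the validity of the subdivision. A node of $\mathcal T^\ast$ is \emph{expanded} in $\mathcal T_\varepsilon$ iff it is visited, $d<D$, it admits a valid split, and $\Gamma_\tau(B)>\varepsilon$.

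The first step is an induction on depth $d$. The claim is that every node visited in $\mathcal T_{\varepsilon_2}$ is also visited in $\mathcal T_{\varepsilon_1}$. The root is visited in both. Now let a node of depth $d+1$ be visited in $\mathcal T_{\varepsilon_2}$. Then its parent $B$ was visited in $\mathcal T_{\varepsilon_2}$, so by induction it was visited in $\mathcal T_{\varepsilon_1}$. The parent was also expanded in $\mathcal T_{\varepsilon_2}$, so it is splittable, has $d<D$, and satisfies $\Gamma_\tau(B)>\varepsilon_2\geq\varepsilon_1$. Hence it is expanded in $\mathcal T_{\varepsilon_1}$ as well, with the same children. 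It follows that the expanded internal nodes of $\mathcal T_{\varepsilon_2}$ form a subset of those of $\mathcal T_{\varepsilon_1}$. Consequently each leaf of $\mathcal T_{\varepsilon_1}$ lies inside a leaf of $\mathcal T_{\varepsilon_2}$, since the leaf partitions are the frontiers of nested rooted subtrees of $\mathcal T^\ast$. This is exactly the refinement statement.

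The second step is the counting inequality. Every leaf $L$ of $\mathcal T_{\varepsilon_2}$ is a visited node of $\mathcal T_{\varepsilon_1}$. It is partitioned by those leaves of $\mathcal T_{\varepsilon_1}$ that lie in the subtree rooted at $L$, and there is at least one of them because blocks are non-empty. These groups are disjoint for distinct $L$, because the leaves of $\mathcal T_{\varepsilon_2}$ are pairwise disjoint. Summing over $L$ gives $|\mathcal L(\mathcal T_{\varepsilon_1})|\geq|\mathcal L(\mathcal T_{\varepsilon_2})|$. A slightly stronger count is also available: each extra expansion replaces one leaf by $|\operatorname{ch}(B)|\geq1$ leaves. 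Since $|\operatorname{ch}(B)|\geq2$ for a genuine split, the count strictly grows whenever the trees differ.

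The only real obstacle is formalizing that Algorithm~\ref{alg:bmfa} produces a well-defined tree independent of queue order, so that the comparison can be made inside the common $\mathcal T^\ast$. I would handle this first with a lemma. The lemma states that the set of processed nodes is the smallest set that contains the root and is closed under taking the children of expanded nodes. Each decision is a deterministic local function of $(B,d)$, thanks to the strict inequality convention, and this yields the lemma. Once it is in place, the monotone threshold argument above is routine.
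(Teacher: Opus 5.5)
Your proof is correct and follows the same route as the paper: an induction on depth showing that every node split under $\varepsilon_2$ is also visited and split under $\varepsilon_1$ (because $\Gamma_\tau(B)>\varepsilon_2\geq\varepsilon_1$ and the children do not depend on the threshold), so lowering the threshold can only add branches. Your order-independence lemma and your leaf count via the subtree below each leaf of $\mathcal T_{\varepsilon_2}$ fill in details that the paper's sketch leaves implicit.
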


\begin{proof}[Proof sketch]
Induction on tree depth shows that every node split under the larger threshold also satisfies the split condition under the smaller threshold. Lowering the threshold can add branches but cannot remove an existing branch. The complete proof appears in Supplementary Section~\ref{app:proof_structure}.
\end{proof}

\begin{proposition}[Maximum-depth monotonicity]
\label{prop:depth_monotonicity}
For fixed input, temperature, and threshold, increasing the maximum depth from $D_1$ to $D_2\geq D_1$ cannot decrease the number of leaves or the proxy free energy $\widetilde F_\tau(\mathcal T)$ associated with the final leaf partition.
\end{proposition}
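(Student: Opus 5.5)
Fix the input, $\tau$ and $\varepsilon$, and write $\mathcal T_1$ and $\mathcal T_2$ for the trees produced by Algorithm~\ref{alg:bmfa} with maximum depths $D_1\le D_2$. The plan is to prove the structural claim that $\mathcal T_2$ refines $\mathcal T_1$: every leaf of $\mathcal T_1$ is a union of leaves of $\mathcal T_2$. Both the leaf count and the proxy statements then follow from this, the second by Theorem~\ref{thm:partition_monotonicity}.

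\textbf{Nesting of split decisions.} The key observation is that the split decision at a node $(B,d)$ depends only on $B$, $d$ and the fixed score grid. The node splits exactly when $d<D$, $B$ admits a valid split, and $\Gamma_\tau(B)>\varepsilon$. The score $\Gamma_\tau(B)$ from Eq.~\eqref{eq:bmfa_refinement_score} and the children $\operatorname{ch}(B)$ depend on neither $D$ nor the processing order. I would argue by induction on $d$ that every node of $\mathcal T_1$ at depth $d$ is also a node of $\mathcal T_2$. The root is common to both trees. If $(B,d)$ is split in $\mathcal T_1$, then $d<D_1\le D_2$ and the other two conditions are identical, so $B$ is also split in $\mathcal T_2$ into the same children. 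Hence every internal node of $\mathcal T_1$ is internal in $\mathcal T_2$. Every leaf $L$ of $\mathcal T_1$ is then a node of $\mathcal T_2$, and the descendants of $L$ in $\mathcal T_2$ partition $L$. This makes $\mathcal L(\mathcal T_2)$ a refinement of $\mathcal L(\mathcal T_1)$, both viewed as partitions of $\Omega$.

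\textbf{Leaf count.} Because $\mathcal L(\mathcal T_2)$ refines $\mathcal L(\mathcal T_1)$, each leaf of $\mathcal T_1$ contains at least one leaf of $\mathcal T_2$, and distinct leaves of $\mathcal T_1$ contain disjoint sets of such leaves. Therefore $|\mathcal L(\mathcal T_2)|\ge|\mathcal L(\mathcal T_1)|$.

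\textbf{Proxy free energy.} Comparing Eq.~\eqref{eq:tree_approximation} with Eq.~\eqref{eq:partition_proxy} at $B=\Omega$ gives
\[
\widetilde F_\tau(\mathcal T)=\Phi_\tau\bigl(\Omega;\mathcal L(\mathcal T)\bigr),
\]
since $n_L/N=n_L/n_\Omega$. Applying Theorem~\ref{thm:partition_monotonicity} with $\mathcal P=\mathcal L(\mathcal T_1)$ and $\mathcal Q=\mathcal L(\mathcal T_2)$ then yields $\widetilde F_\tau(\mathcal T_1)\le\widetilde F_\tau(\mathcal T_2)$.

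\textbf{Main obstacle.} The only delicate step is the induction. It requires that $\operatorname{ch}(B)$ is a deterministic function of $B$ alone, so that both runs split a shared node into the same children. It also requires that the queue order in Algorithm~\ref{alg:bmfa} does not affect the output. Since each node's fate is decided locally and independently of the other nodes, the final tree is determined by these local decisions. I would state this order-independence explicitly before running the induction.
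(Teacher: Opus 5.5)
Your proposal is correct and follows essentially the same route as the paper. Decisions at depths below $D_1$ coincide in both runs, so the deeper tree only subdivides leaves stopped by the depth limit, which makes $\mathcal L(\mathcal T_2)$ a refinement of $\mathcal L(\mathcal T_1)$; this gives the leaf-count inequality, and Theorem~\ref{thm:partition_monotonicity}, applied through $\widetilde F_\tau(\mathcal T)=\Phi_\tau(\Omega;\mathcal L(\mathcal T))$, gives the proxy inequality. Your remarks on the determinism of $\operatorname{ch}(B)$ and on queue-order independence make explicit what the paper's sketch leaves implicit.
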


\begin{proof}[Proof sketch]
Increasing the maximum depth leaves every decision up to depth $D_1$ unchanged and can only subdivide leaves previously stopped by the depth limit. The leaf count cannot decrease, and Theorem~\ref{thm:partition_monotonicity} ensures that the proxy free energy does not decrease under refinement. Supplementary Section~\ref{app:proof_structure} provides the complete proof.
\end{proof}

The monotonicity properties apply to tree structure and the free-energy lower bound. They do not imply strictly monotone downstream Top-1 accuracy, because subsequent nonlinear layers and finite classification margins can produce non-monotone discrete predictions.

\begin{theorem}[Termination and worst-case tree size]
\label{thm:termination_complexity}
For a finite grid $\Omega$ and a finite maximum depth $D$, Algorithm~\ref{alg:bmfa} terminates after finitely many steps. If every internal node has exactly four children, then
$|\mathcal L(\mathcal T)|\leq\min(N,4^D)$, and the total number of visited nodes equals $(4|\mathcal L(\mathcal T)|-1)/3$.
\end{theorem}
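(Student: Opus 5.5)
Termination and both counts follow from structural properties of the rooted tree that Algorithm~\ref{alg:bmfa} implicitly builds. For termination, I will observe that every node enqueued at depth $d$ is a block of $\Omega$ at depth $d\le D$. Each popped node is handled in one of two ways. It may be declared a leaf, which removes it permanently. Otherwise it is split into at most four non-empty, pairwise disjoint children at depth $d+1$.

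Hence the set of all nodes ever enqueued forms a tree of depth at most $D$ with branching factor at most four. It contains at most $\sum_{d=0}^{D}4^d<\infty$ nodes. Each iteration of the while loop pops exactly one node and performs $O(1)$ work given prefix sums. Each node is pushed at most once, because children of distinct parents are distinct blocks and the tree structure prevents re-insertion. So the loop runs finitely many times.

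For the leaf bound $|\mathcal L(\mathcal T)|\le\min(N,4^D)$, I will argue two separate inequalities. First, the leaves are pairwise disjoint non-empty subsets of $\Omega$ whose union is $\Omega$. I will check the union property by induction: the invariant is that the union of the queue contents and the current leaves is always a partition of $\Omega$. The invariant holds initially and is preserved by each split or leaf assignment, since children partition their parent. Hence $|\mathcal L(\mathcal T)|\le N$.

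Second, every leaf lies at depth at most $D$, and the number of nodes at depth $d$ is at most $4^d$. I will map each leaf at depth $d$ to the set of its $4^{D-d}$ virtual descendants at depth $D$ in the complete $4$-ary tree. These sets are disjoint for distinct leaves, which gives $|\mathcal L(\mathcal T)|\le 4^D$. This is the Kraft-type counting $\sum_{L}4^{-d(L)}=1$.

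For the visited-node count, I will use the hypothesis that every internal node has exactly four children. Let $I$ be the number of internal (split) nodes and $V=I+|\mathcal L(\mathcal T)|$ the total number of visited nodes. Every node except the root has exactly one parent, so counting parent--child edges gives $V-1=4I$. Eliminating $I$ yields $V-1=4(V-|\mathcal L|)$, that is, $3V=4|\mathcal L|-1$ and $V=(4|\mathcal L(\mathcal T)|-1)/3$.

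The only delicate point is interpretive. I must make sure ``visited'' means popped from $\mathcal Q$, so that leaves stopped by depth or by no valid split are counted once and internal nodes are counted once. I also need to note that the exact-four-children hypothesis is what makes $4I=V-1$ hold. With ragged splits near the grid boundary, the edge count only gives $V-1\le 4I$, and the identity would fail. I expect no substantive obstacle beyond stating the partition invariant cleanly.
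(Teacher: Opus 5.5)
Your proposal is correct and takes essentially the same route as the paper. Termination follows from bounded branching and bounded depth. The two leaf bounds come from the single-token partition and the complete depth-$D$ quadtree; your partition invariant and Kraft-type embedding spell these out. The node count follows from the edge identity $4I=I+|\mathcal L(\mathcal T)|-1$. Your remark that the identity weakens to an inequality under ragged splits matches the paper's own caveat about irregular boundary splits with two or three children.
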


\begin{proof}[Proof sketch]
A finite branching factor and finite depth imply a finite tree. The leaf-count bounds follow from the complete depth-$D$ quadtree and the single-token partition. For a full quadtree, the total-node formula follows from the edge identity $4I=I+L-1$. Supplementary Section~\ref{app:proof_structure} gives the complete counting argument.
\end{proof}

If two-dimensional prefix sums make rectangular mean queries $O(1)$, tree-construction time is linear in the number of visited nodes, and memory is of the same order as the queue and leaf set. For irregular boundary splits with two or three children, the exact total-node formula changes, while finite termination and $O(\text{number of visited nodes})$ complexity remain valid.

\subsection{Boundary Conditions, Numerical Implementation, and Computational Scope}
\label{subsec:boundary_implementation}

A homogeneous block satisfies $\Gamma_\tau(B)=G_\tau(B)=0$ and requires no refinement. A single-token block satisfies $F_\tau(B)=M(B)$ and has no valid children. Non-square grids can use unequal rectangular children, but Eq.~\eqref{eq:bmfa_refinement_score} must employ the actual valid-token weights $n_C/n_B$ rather than assigning equal weight to four children. Padding and invalid positions must be excluded from $\Omega$ and all block statistics; classification tokens and other non-spatial positions remain separate.

The partition proxy uses a numerically stable log-sum-exp implementation. Let $a_C=M(C)/\tau$ and $a_{\max}=\max_Ca_C$. Then
\begin{equation}
 \Phi_\tau(B;\mathcal P)
 =\tau\left[
 a_{\max}+
 \log\sum_{C\in\mathcal P(B)}
 \frac{n_C}{n_B}e^{a_C-a_{\max}}
 \right].
\label{eq:stable_partition_lse}
\end{equation}
Equation~\eqref{eq:stable_partition_lse} is mathematically identical to Eq.~\eqref{eq:partition_proxy}, while every exponential argument is non-positive.

The current prototype constructs the tree and replaces block scores after the full QK score matrix has been computed. The implementation isolates the refinement criterion, prediction agreement, and KL fidelity, but does not demonstrate end-to-end sparse acceleration. Reported leaf ratios represent potential granularity, and Python/NumPy runtime measures diagnostic overhead. Converting BMFA into realized speedup requires a reliable low-cost score proxy before the full matrix multiplication and a fused kernel capable of skipping the corresponding QK or downstream token computation. Proposition~\ref{prop:score_stability} specifies the error margin that such a proxy must satisfy.

\section{Experiments}
\label{sec:experiments}

\subsection{Experimental Protocol}
\label{subsec:experimental_setup}

The evaluation follows four levels: controlled mechanism diagnosis, real-data probes, closed-loop model fidelity, and standard image classification. The synthetic experiment varies minority area ratio $\alpha$ and logit advantage $\Delta$ on a $16\times16$ score grid and aggregates underestimation only when minority Gibbs mass is at least $0.5$, resulting in $396$ valid parameter settings per method. Real-data probes use all $5{,}000$ COCO val2017 images and $36{,}149$ instances, with images processed at $256\times256$ and mapped to a $16\times16$ grid. The image-edge probe restricts minority regions using instance annotations and modulates local scores with image-edge responses; the mask-only probe retains instance geometry without pixel-edge magnitude. The LVIS probe applies the same statistical protocol to $20{,}000$ instances from $1{,}999$ images. Annotations participate only in diagnostic score construction and are not inference inputs to BMFA. Complete data processing, model configuration, hardware, and reporting conventions are provided in Supplementary Section~\ref{app:detailed_experimental_protocol}.

Comparisons include the mean-based block summary denoted as Sparge-style mean, a second-order mean-variance correction, fixed-depth quadtrees, the adaptive BMFA quadtree, and Edge-aware keep, which preserves all annotated minority positions. Sparge-style mean is a mechanism-level proxy for mean-based block screening and is not the full official SpargeAttention kernel of \citet{zhang2025spargeattn}. Edge-aware keep uses diagnostic annotations and therefore serves only as a zero-error oracle rather than a deployable method. Closed-loop and ImageNet-1K experiments use pretrained DeiT-Tiny/16 from \citet{touvron2021deit}. Because the materials contain neither repeated random runs nor standard deviations, all figures report the recorded aggregate values without error bars or significance markers.

\subsection{Controlled Boundary-Minority Stress Test}
\label{subsec:synthetic_experiment}

Figure~\ref{fig:synthetic_failure_curve} fixes $\alpha=4/256$ and varies the minority logit advantage $\Delta$. Proposition~\ref{prop:minority_threshold} gives the critical point at which the minority state reaches one-half of the Gibbs mass: $\Delta=\log 63\approx4.143$. Beyond the threshold, underestimation from the mean summary, variance correction, and fixed partitions continues to increase with response advantage, whereas BMFA triggers local refinement and reduces the residual error of the controlled structure to near zero. The curve directly evaluates the mass-reversal regime identified by the theoretical analysis rather than a trend induced by natural-image noise or class imbalance.

\begin{figure*}[t]
  \centering
  \includegraphics[width=0.92\textwidth]{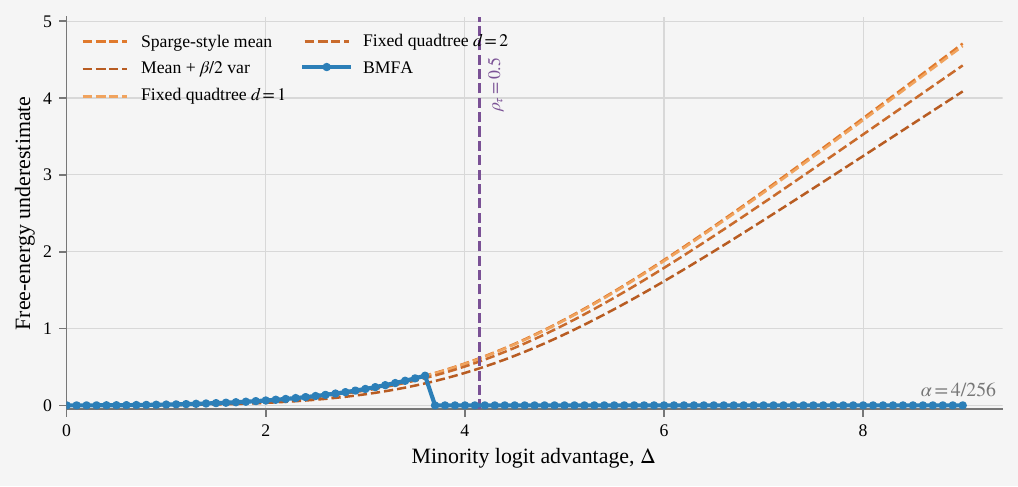}
  \caption{Controlled boundary-minority stress test. The minority area ratio is fixed at $\alpha=4/256$, and the horizontal axis denotes the logit advantage $\Delta$. The purple dashed line marks the theoretical threshold at which minority Gibbs mass reaches $0.5$. BMFA resolves the minority high-response structure after the lower-bound refinement score is activated, whereas the mean summary, second-order moment correction, and shallow fixed partitions retain an underestimate that grows with $\Delta$. Curves are plotted directly from the complete parameter-sweep records.}
  \label{fig:synthetic_failure_curve}
\end{figure*}

Across all $396$ minority-dominant settings, Sparge-style mean produces mean and P95 underestimates of $2.582$ and $5.114$. BMFA reduces the two values to $0.261$ and $2.035$ with an average leaf ratio of $5.794\%$. A fixed quadtree with $d=2$ uses a comparable $6.250\%$ leaf ratio but retains a mean underestimate of $1.129$, showing that the improvement does not follow automatically from a modest increase in granularity and depends on refinement location. Edge-aware keep reaches zero error at a $100\%$ leaf ratio and therefore defines only a full-retention upper bound.

\subsection{Real-Data Mechanism Validation and Cross-Dataset Generalization}
\label{subsec:real_probe}

Figure~\ref{fig:probe_pareto} compares synthetic, COCO image-edge, and LVIS image-edge probes in a common coordinate system. BMFA remains in the low-mean-error region for all three data sources and substantially outperforms fixed partitions of comparable leaf ratio. The trend indicates that mean underestimation in the binary model is not restricted to an artificial shape distribution. On COCO, BMFA reduces the mean underestimate of Sparge-style mean from $2.254$ to $0.526$, a relative reduction of $76.7\%$, at a $30.810\%$ average leaf ratio. On LVIS, the corresponding reduction is from $2.277$ to $0.596$, or $73.8\%$, at a $29.225\%$ leaf ratio.

\begin{figure*}[t]
  \centering
  \includegraphics[width=0.98\textwidth]{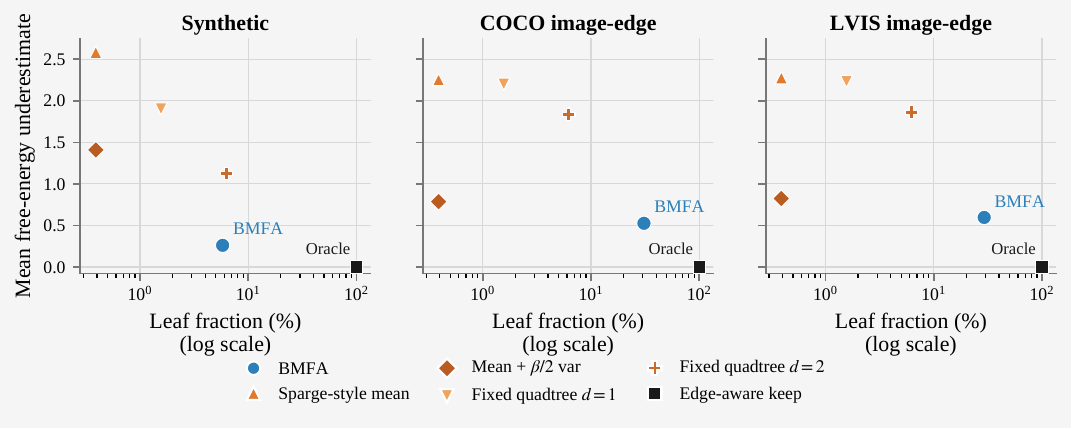}
  \caption{Trade-off between mean free-energy underestimate and leaf ratio on synthetic, COCO, and LVIS probes. The horizontal axis uses a logarithmic scale to display both the $0.391\%$ single-block summary and the $100\%$ oracle. BMFA attains lower mean error than shallow fixed partitions across all three data sources. Edge-aware keep is a full-retention upper bound that uses diagnostic boundaries.}
  \label{fig:probe_pareto}
\end{figure*}

\begin{table*}[t]
\centering
\caption{Complete image-edge probe results on COCO and LVIS. Lower gap is better; Leaf denotes the average leaf ratio. Every method is evaluated on the same instance set. The best deployable result is shown in bold, while the annotation-dependent oracle is excluded from best-value comparisons.}
\label{tab:real_probe}
\resizebox{0.96\textwidth}{!}{%
\begin{tabular}{lrrrrrr}
\toprule
& \multicolumn{3}{c}{COCO val2017, $36{,}149$ instances} & \multicolumn{3}{c}{LVIS, $20{,}000$ instances}\\
\cmidrule(lr){2-4}\cmidrule(lr){5-7}
Method & Mean gap & P95 gap & Leaf (\%) & Mean gap & P95 gap & Leaf (\%)\\
\midrule
Sparge-style mean & 2.254 & 2.551 & 0.391 & 2.277 & 2.613 & 0.391\\
Mean + $\beta/2$ var & 0.787 & \textbf{1.162} & 0.391 & 0.824 & \textbf{1.248} & 0.391\\
Fixed quadtree $d=1$ & 2.200 & 2.524 & 1.563 & 2.233 & 2.590 & 1.563\\
Fixed quadtree $d=2$ & 1.840 & 2.241 & 6.250 & 1.865 & 2.313 & 6.250\\
\textbf{BMFA} & \textbf{0.526} & 2.434 & 30.810 & \textbf{0.596} & 2.509 & 29.225\\
Edge-aware keep (oracle) & 0.000 & 0.000 & 100.000 & 0.000 & 0.000 & 100.000\\
\bottomrule
\end{tabular}}
\end{table*}

Table~\ref{tab:real_probe} also reveals the current limitation of BMFA. BMFA yields the lowest mean gap, but its COCO and LVIS P95 values are higher than those of the second-order variance correction. The behavior agrees with Proposition~\ref{prop:one_step_counterexample}: a one-step score identifies substantial heterogeneity across child blocks but can miss deeper structures whose means cancel at the current scale. The mask-only probe provides an independent check. BMFA obtains a mean gap of $0.534$, below the $0.600$ value of Mean+$\beta/2$var, while the corresponding P95 values are $1.830$ and $0.835$. The evidence therefore supports reduced average boundary-minority underestimation and adaptive localization of critical regions, but does not support uniform dominance over the second-order approximation at every quantile.

\subsection{Threshold Ablation}
\label{subsec:ablation}

Figure~\ref{fig:epsilon_ablation} reports the effects of $\varepsilon$ on mean error, leaf ratio, and runtime of the unfused diagnostic implementation. Decreasing the threshold from $0.05$ to $0.001$ reduces mean gap from $1.644$ to $0.120$, increases average leaf ratio from $9.26\%$ to $53.24\%$, and raises runtime from $0.697$ to $3.350$ ms per instance. The default configuration $\varepsilon=0.005$, $D=4$, and grid size $16$ obtains a mean gap of $0.549$ at a $29.08\%$ leaf ratio with $2.099$ ms per-instance Python/NumPy overhead.

The complete threshold curves and runtime visualization are reported in Supplementary Figure~\ref{fig:epsilon_ablation}.

Increasing maximum depth from $2$ to $4$ reduces mean gap from $1.917$ to $0.549$; a further increase to $5$ changes neither the gap nor leaf ratio, indicating that depth $4$ reaches the finest level resolved by the criterion on a $16\times16$ grid. Grid size $8$ reduces mean gap to $0.403$ but increases leaf ratio to $56.03\%$. Grid size $32$ reduces leaf ratio to $12.39\%$ but increases mean gap to $0.910$ and runtime to $3.553$ ms per instance. Grid size $16$ therefore provides the current compromise among approximation error, granularity, and implementation overhead. Complete depth and grid tables are reported in Supplementary Section~\ref{app:depth_grid_ablation}.

\subsection{Closed-Loop Attention-Output Fidelity}
\label{subsec:closed_loop}

Mechanism probes measure free-energy approximation but do not alone establish preservation of model outputs. Figure~\ref{fig:closed_loop_fidelity} inserts the same screening rule into the last four DeiT-Tiny blocks and compares outputs with the full model on $5{,}000$ COCO images. BMFA with $\varepsilon=0.8$ uses a $30.82\%$ average leaf ratio, reaches $75.98\%$ Top-1 agreement, and yields $D_{\mathrm{KL}}=0.149$. Random retention with $p=0.25$ uses a $26.53\%$ leaf ratio but reaches only $61.54\%$ agreement and $D_{\mathrm{KL}}=0.406$. In the higher-budget regime, BMFA with $\varepsilon=0.5$ and random retention with $p=0.5$ use leaf ratios of $52.66\%$ and $51.00\%$, respectively; their agreements are $87.00\%$ and $72.40\%$, and their KL divergences are $0.041$ and $0.189$. The persistent differences at comparable budgets show that output fidelity arises primarily from free-energy-based spatial selection rather than retention quantity alone.

\begin{figure*}[t]
  \centering
  \includegraphics[width=0.94\textwidth]{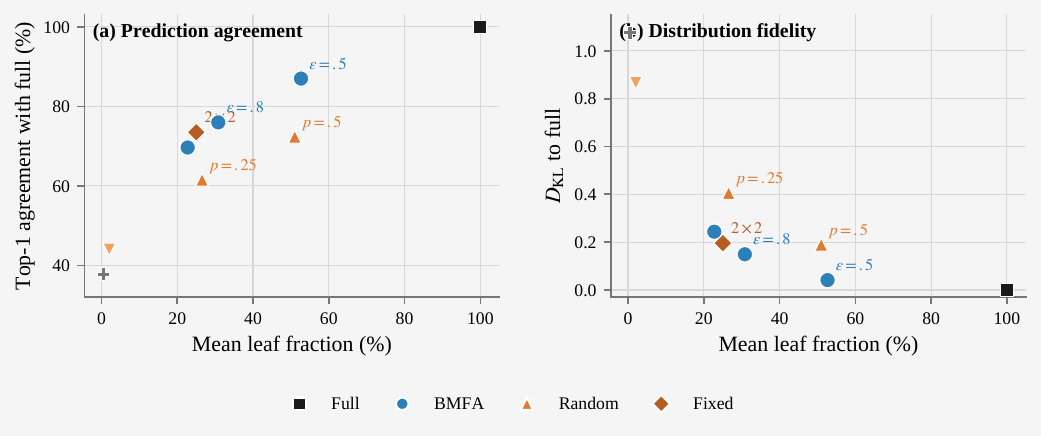}
  \caption{Closed-loop fidelity on COCO 5k when replacing attention logits in the last four DeiT-Tiny blocks. (a) Top-1 prediction agreement with the full model. (b) KL divergence from the approximate output distribution to the full-model distribution. At leaf budgets comparable to random retention, BMFA yields higher agreement and lower KL. Global mean and very shallow fixed partitions occupy the low-budget, low-fidelity region.}
  \label{fig:closed_loop_fidelity}
\end{figure*}

A fixed $2\times2$ partition uses a $25.00\%$ leaf ratio and obtains $73.50\%$ agreement with $0.196$ KL. BMFA with $\varepsilon=0.8$ uses a moderately larger $30.82\%$ ratio but improves agreement by $2.48$ percentage points and lowers KL. The variance-only configuration requires a $97.57\%$ leaf ratio to reach $99.02\%$ agreement and therefore does not demonstrate effective low-granularity selection. Approximating more layers degrades every method, but the advantage of BMFA over budget-matched random retention persists for last-2, last-4, last-8, and all-layer settings, as shown in Supplementary Figure~\ref{fig:layer_sensitivity}.

\subsection{ImageNet-1K Model-Level Evaluation}
\label{subsec:imagenet}

Figure~\ref{fig:imagenet_tradeoffs}(a) compares BMFA with controls that have explicit leaf ratios. BMFA with $\varepsilon=0.5$ reaches $71.520\%$ Top-1 accuracy at a $55.861\%$ average leaf ratio, a decrease of $0.564$ percentage points from the full DeiT-Tiny result of $72.084\%$, with $93.646\%$ Top-1 agreement. Random retention with $p=0.5$ uses a comparable $51.024\%$ leaf ratio but reaches only $69.148\%$ Top-1. BMFA with $\varepsilon=0.8$ and random retention with $p=0.25$ attain $69.820\%$ and $65.146\%$ Top-1, respectively. A fixed $2\times2$ partition reaches $69.322\%$ at a $25\%$ leaf ratio, below the $69.820\%$ result of BMFA with $\varepsilon=0.8$. The comparisons again indicate that adaptive spatial allocation is more effective than fixed or random assignment.

\begin{figure*}[t]
  \centering
  \includegraphics[width=0.97\textwidth]{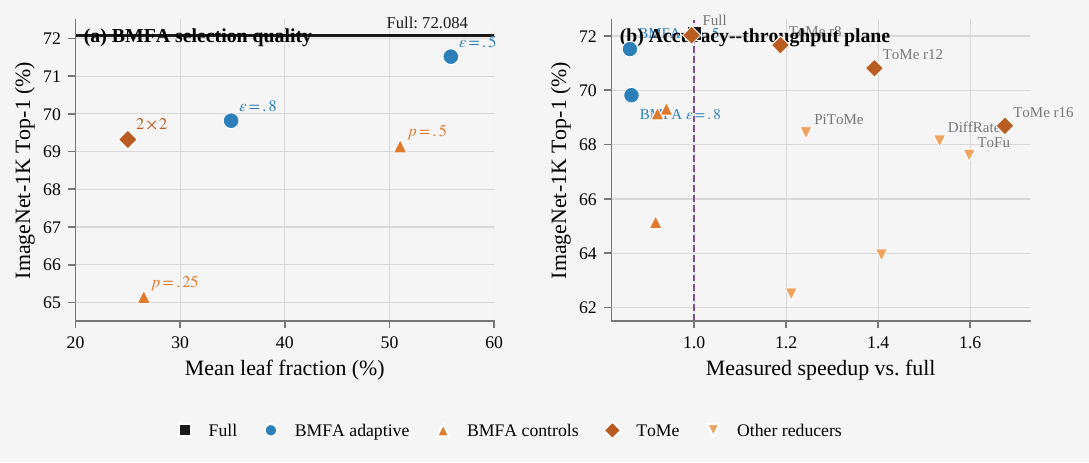}
  \caption{Model-level results on the ImageNet-1K validation set. (a) Leaf-ratio--Top-1 trade-off for adaptive BMFA, random retention, and fixed controls; the black horizontal line denotes full DeiT-Tiny. (b) Measured accuracy--throughput plane for all local implementations; the purple dashed line denotes full-model throughput. Because the current BMFA prototype performs score replacement after full QK computation, panel (b) reports prototype overhead rather than a speed Pareto claim.}
  \label{fig:imagenet_tradeoffs}
\end{figure*}

\begin{table*}[t]
\centering
\caption{Representative ImageNet-1K validation results. Speedup is computed from measured images per second relative to the full model in the same environment. Current BMFA results evaluate selection quality and do not represent an optimized sparse kernel.}
\label{tab:imagenet_results}
\resizebox{0.90\textwidth}{!}{%
\begin{tabular}{lrrrrr}
\toprule
Method & Top-1 (\%) & Drop (pp) & Agreement (\%) & Leaf (\%) & Speedup\\
\midrule
Full DeiT-Tiny & 72.084 & 0.000 & -- & -- & 1.000$\times$\\
ToMe $r=8$~\citep{bolya2023tome} & 71.666 & 0.418 & 94.122 & -- & 1.187$\times$\\
Fixed $2\times2$ & 69.322 & 2.762 & 87.322 & 25.000 & 0.940$\times$\\
BMFA $\varepsilon=0.8$ & 69.820 & 2.264 & 88.146 & 34.868 & 0.864$\times$\\
\textbf{BMFA $\varepsilon=0.5$} & \textbf{71.520} & \textbf{0.564} & \textbf{93.646} & 55.861 & 0.860$\times$\\
Random $p=0.25$ & 65.146 & 6.938 & 79.458 & 26.530 & 0.916$\times$\\
Random $p=0.5$ & 69.148 & 2.936 & 86.908 & 51.024 & 0.920$\times$\\
\bottomrule
\end{tabular}}
\end{table*}

Figure~\ref{fig:imagenet_tradeoffs}(b) additionally includes ToMe, PiToMe \citep{tran2024pitome}, DiffRate \citep{chen2023diffrate}, CrossGET \citep{shi2024crossget}, ToFu \citep{kim2024tofu}, and a local DCT token-reduction implementation. ToMe with $r=8$ reaches $71.666\%$ Top-1 and $1.187\times$ measured speedup, providing a stronger accuracy-throughput point in the current table. BMFA throughput is below that of the full model, so the present contribution is restricted to boundary-minority failure analysis, selection quality, and output fidelity. Transforming the leaf structure into speedup requires predicting the refinement score before full QK computation and implementing a fused sparse or token-reduction kernel.

\subsection{Qualitative Analysis on Real Instances}
\label{subsec:qualitative}

Figure~\ref{fig:qualitative_cases} presents four COCO instances from the kite, person, book, and spoon categories. Each example contains the image crop, the boundary-minority response field constructed from the instance boundary, and the adaptive BMFA refinement. Minority area ratios range from $9.8\%$ to $12.5\%$, Gibbs masses exceed $97\%$, and free-energy gaps range from $2.854$ to $3.091$. Fine leaves concentrate around contours, elongated structures, and small parts, while extensive near-homogeneous regions retain coarse representations. The spatial allocation agrees with the theoretical local-heterogeneity criterion and does not imply that an external boundary label is required during inference.

Supplementary Figure~\ref{fig:qualitative_cases} provides the complete visualization panels and per-instance statistics.

\subsection{Scope of the Empirical Claims and Current Limitations}
\label{subsec:experimental_limitations}

The experiments support three claims. Mean-based coarse summaries systematically underestimate free energy in boundary-minority regions; BMFA substantially reduces average underestimation across synthetic, COCO, and LVIS data; and adaptive selection preserves DeiT outputs and ImageNet accuracy more effectively than random or fixed controls at comparable leaf budgets. The evidence also has explicit boundaries. First, the one-step score is a lower-bound signal, and the P95 tail does not uniformly outperform variance correction. Second, the real-data probes use annotation-informed score construction to validate the mechanism rather than to emulate annotation-free deployment. Third, the prototype applies the approximation after full QK computation and has not demonstrated sparse acceleration. Future evaluation should examine multi-level look-ahead or dual-bound certification and combine a low-cost score proxy with a hardware-efficient kernel.

\section{Conclusion}
\label{sec:conclusion}
We studied the failure of coarse token and attention summaries in boundary-minority regions, where spatially rare high responses can dominate exponential attention mass while contributing negligibly to a block mean. The theoretical analysis connects the free-energy gap to KL divergence, establishes minority-mass reversal and asymptotic mean blindness, and clarifies why finite-order moment corrections cannot uniformly resolve the limiting regime. BMFA operationalizes the analysis through a hierarchical lower-bound refinement score and an adaptive quadtree that allocates resolution to unresolved heterogeneous blocks. Synthetic, COCO, LVIS, closed-loop DeiT-Tiny, and ImageNet-1K evaluations consistently support the selection-quality claim. The one-step score remains a lower-bound signal rather than a complete certificate, and the present implementation operates after full QK computation. Multi-level look-ahead, certified upper bounds, low-cost score prediction, and fused sparse or token-reduction kernels constitute the principal directions required to convert the demonstrated selection fidelity into end-to-end acceleration.

\begingroup
\footnotesize
\bibliographystyle{unsrtnat}
\bibliography{references}

@inproceedings{dosovitskiy2021vit,
  title     = {An Image Is Worth 16x16 Words: Transformers for Image Recognition at Scale},
  author    = {Dosovitskiy, Alexey and Beyer, Lucas and Kolesnikov, Alexander and Weissenborn, Dirk and Zhai, Xiaohua and Unterthiner, Thomas and Dehghani, Mostafa and Minderer, Matthias and Heigold, Georg and Gelly, Sylvain and Uszkoreit, Jakob and Houlsby, Neil},
  booktitle = {International Conference on Learning Representations},
  year      = {2021},
}

@inproceedings{liu2021swin,
  title     = {Swin Transformer: Hierarchical Vision Transformer Using Shifted Windows},
  author    = {Liu, Ze and Lin, Yutong and Cao, Yue and Hu, Han and Wei, Yixuan and Zhang, Zheng and Lin, Stephen and Guo, Baining},
  booktitle = {Proceedings of the IEEE/CVF International Conference on Computer Vision},
  pages     = {10012--10022},
  year      = {2021},
}

@inproceedings{strudel2021segmenter,
  title     = {Segmenter: Transformer for Semantic Segmentation},
  author    = {Strudel, Robin and Garcia, Ricardo and Laptev, Ivan and Schmid, Cordelia},
  booktitle = {Proceedings of the IEEE/CVF International Conference on Computer Vision},
  pages     = {7262--7272},
  year      = {2021},
}

@inproceedings{radford2021clip,
  title     = {Learning Transferable Visual Models From Natural Language Supervision},
  author    = {Radford, Alec and Kim, Jong Wook and Hallacy, Chris and Ramesh, Aditya and Goh, Gabriel and Agarwal, Sandhini and Sastry, Girish and Askell, Amanda and Mishkin, Pamela and Clark, Jack and Krueger, Gretchen and Sutskever, Ilya},
  booktitle = {Proceedings of the 38th International Conference on Machine Learning},
  pages     = {8748--8763},
  year      = {2021},
  volume    = {139},
  series    = {Proceedings of Machine Learning Research},
}

@inproceedings{touvron2021deit,
  title     = {Training Data-Efficient Image Transformers and Distillation Through Attention},
  author    = {Touvron, Hugo and Cord, Matthieu and Douze, Matthijs and Massa, Francisco and Sablayrolles, Alexandre and J{\'e}gou, Herv{\'e}},
  booktitle = {Proceedings of the 38th International Conference on Machine Learning},
  pages     = {10347--10357},
  year      = {2021},
  volume    = {139},
  series    = {Proceedings of Machine Learning Research},
}

@inproceedings{rao2021dynamicvit,
  title     = {DynamicViT: Efficient Vision Transformers with Dynamic Token Sparsification},
  author    = {Rao, Yongming and Zhao, Wenliang and Liu, Benlin and Lu, Jiwen and Zhou, Jie and Hsieh, Cho-Jui},
  booktitle = {Advances in Neural Information Processing Systems},
  volume    = {34},
  pages     = {13937--13949},
  year      = {2021},
}

@inproceedings{liang2022evit,
  title     = {Not All Patches Are What You Need: Expediting Vision Transformers via Token Reorganizations},
  author    = {Liang, Youwei and Ge, Chongjian and Tong, Zhan and Song, Yibing and Wang, Jue and Xie, Pengtao},
  booktitle = {International Conference on Learning Representations},
  year      = {2022},
}

@inproceedings{yin2022avit,
  title     = {A-ViT: Adaptive Tokens for Efficient Vision Transformer},
  author    = {Yin, Hongxu and Vahdat, Arash and Alvarez, Jose M. and Mallya, Arun and Kautz, Jan and Molchanov, Pavlo},
  booktitle = {Proceedings of the IEEE/CVF Conference on Computer Vision and Pattern Recognition},
  pages     = {10809--10818},
  year      = {2022},
}

@inproceedings{ryoo2021tokenlearner,
  title     = {TokenLearner: What Can 8 Learned Tokens Do for Images and Videos?},
  author    = {Ryoo, Michael S. and Piergiovanni, A. J. and Arnab, Anurag and Dehghani, Mostafa and Angelova, Anelia},
  booktitle = {Advances in Neural Information Processing Systems},
  volume    = {34},
  pages     = {23696--23708},
  year      = {2021},
}

@inproceedings{bolya2023tome,
  title     = {Token Merging: Your ViT but Faster},
  author    = {Bolya, Daniel and Fu, Cheng-Yang and Dai, Xiaoliang and Zhang, Peizhao and Feichtenhofer, Christoph and Hoffman, Judy},
  booktitle = {International Conference on Learning Representations},
  year      = {2023},
}

@inproceedings{chang2023stvit,
  title     = {Making Vision Transformers Efficient From a Token Sparsification View},
  author    = {Chang, Shuning and Wang, Pichao and Wang, Fan and Li, Hao and Feng, Jiashi},
  booktitle = {Proceedings of the IEEE/CVF Conference on Computer Vision and Pattern Recognition},
  pages     = {6195--6205},
  year      = {2023},
}

@inproceedings{zhang2025spargeattn,
  title     = {SpargeAttention: Accurate and Training-free Sparse Attention Accelerating Any Model Inference},
  author    = {Zhang, Jintao and Xiang, Chendong and Huang, Haofeng and Wei, Jia and Xi, Haocheng and Zhu, Jun and Chen, Jianfei},
  booktitle = {Proceedings of the 42nd International Conference on Machine Learning},
  pages     = {76397--76413},
  year      = {2025},
  volume    = {267},
  series    = {Proceedings of Machine Learning Research},
}

@inproceedings{lin2014coco,
  title     = {Microsoft COCO: Common Objects in Context},
  author    = {Lin, Tsung-Yi and Maire, Michael and Belongie, Serge and Bourdev, Lubomir and Girshick, Ross and Hays, James and Perona, Pietro and Ramanan, Deva and Doll{\'a}r, Piotr and Zitnick, C. Lawrence},
  booktitle = {European Conference on Computer Vision},
  pages     = {740--755},
  year      = {2014},
}

@inproceedings{gupta2019lvis,
  title     = {LVIS: A Dataset for Large Vocabulary Instance Segmentation},
  author    = {Gupta, Agrim and Doll{\'a}r, Piotr and Girshick, Ross},
  booktitle = {Proceedings of the IEEE/CVF Conference on Computer Vision and Pattern Recognition},
  year      = {2019},
}

@inproceedings{deng2009imagenet,
  title     = {ImageNet: A Large-Scale Hierarchical Image Database},
  author    = {Deng, Jia and Dong, Wei and Socher, Richard and Li, Li-Jia and Li, Kai and Fei-Fei, Li},
  booktitle = {2009 IEEE Conference on Computer Vision and Pattern Recognition},
  pages     = {248--255},
  year      = {2009},
}

@inproceedings{wang2024zerotprune,
  title     = {Zero-TPrune: Zero-Shot Token Pruning through Leveraging of the Attention Graph in Pre-Trained Transformers},
  author    = {Wang, Hongjie and Dedhia, Bhishma and Jha, Niraj K.},
  booktitle = {Proceedings of the IEEE/CVF Conference on Computer Vision and Pattern Recognition},
  pages     = {16070--16079},
  year      = {2024},
}

@inproceedings{kim2024tofu,
  title     = {Token Fusion: Bridging the Gap Between Token Pruning and Token Merging},
  author    = {Kim, Minchul and Gao, Shangqian and Hsu, Yen-Chang and Shen, Yilin and Jin, Hongxia},
  booktitle = {Proceedings of the IEEE/CVF Winter Conference on Applications of Computer Vision},
  pages     = {1383--1392},
  year      = {2024},
}

@inproceedings{lee2024mctf,
  title     = {Multi-criteria Token Fusion with One-step-ahead Attention for Efficient Vision Transformers},
  author    = {Lee, Sanghyeok and Choi, Joonmyung and Kim, Hyunwoo J.},
  booktitle = {Proceedings of the IEEE/CVF Conference on Computer Vision and Pattern Recognition},
  pages     = {15741--15750},
  year      = {2024},
}

@inproceedings{norouzi2024algm,
  title     = {ALGM: Adaptive Local-then-Global Token Merging for Efficient Semantic Segmentation with Plain Vision Transformers},
  author    = {Norouzi, Narges and Orlova, Svetlana and de Geus, Daan and Dubbelman, Gijs},
  booktitle = {Proceedings of the IEEE/CVF Conference on Computer Vision and Pattern Recognition},
  pages     = {15773--15782},
  year      = {2024},
}

@inproceedings{tran2024pitome,
  title     = {Accelerating Transformers with Spectrum-Preserving Token Merging},
  author    = {Tran, Hoai-Chau and Nguyen, Duy M. H. and Nguyen, Duy M. and Nguyen, Trung-Tin and Le, Ngan and Xie, Pengtao and Sonntag, Daniel and Zou, James and Nguyen, Binh T. and Niepert, Mathias},
  booktitle = {Advances in Neural Information Processing Systems},
  volume    = {37},
  year      = {2024},
}

@inproceedings{xie2026sadtm,
  title     = {Saliency-Driven Token Merging for Vision Transformers},
  author    = {Xie, Weiying and Chen, Xiaoyu and Zhang, Xin and Hao, Chenhe and Ma, Jitao and Li, Yunsong and Fang, Leyuan},
  booktitle = {Proceedings of the IEEE/CVF Conference on Computer Vision and Pattern Recognition},
  pages     = {32184--32193},
  year      = {2026},
}

@inproceedings{ribar2024sparq,
  title     = {SparQ Attention: Bandwidth-Efficient LLM Inference},
  author    = {Ribar, Luka and Chelombiev, Ivan and Hudlass-Galley, Luke and Blake, Charlie and Luschi, Carlo and Orr, Douglas},
  booktitle = {Proceedings of the 41st International Conference on Machine Learning},
  pages     = {42558--42583},
  year      = {2024},
  volume    = {235},
  series    = {Proceedings of Machine Learning Research},
}

@inproceedings{tang2024quest,
  title     = {QUEST: Query-Aware Sparsity for Efficient Long-Context LLM Inference},
  author    = {Tang, Jiaming and Zhao, Yilong and Zhu, Kan and Xiao, Guangxuan and Kasikci, Baris and Han, Song},
  booktitle = {Proceedings of the 41st International Conference on Machine Learning},
  pages     = {47901--47911},
  year      = {2024},
  volume    = {235},
  series    = {Proceedings of Machine Learning Research},
}

@inproceedings{yuan2025nsa,
  title     = {Native Sparse Attention: Hardware-Aligned and Natively Trainable Sparse Attention},
  author    = {Yuan, Jingyang and Gao, Huazuo and Dai, Damai and Luo, Junyu and Zhao, Liang and Zhang, Zhengyan and Xie, Zhenda and Wei, Yuxing and Wang, Lean and Xiao, Zhiping and Wang, Yuqing and Ruan, Chong and Zhang, Ming and Liang, Wenfeng and Zeng, Wangding},
  booktitle = {Proceedings of the 63rd Annual Meeting of the Association for Computational Linguistics (Volume 1: Long Papers)},
  pages     = {23078--23097},
  year      = {2025},
  publisher = {Association for Computational Linguistics},
  doi       = {10.18653/v1/2025.acl-long.1126},
}

@inproceedings{vaswani2017attention,
  title     = {Attention Is All You Need},
  author    = {Vaswani, Ashish and Shazeer, Noam and Parmar, Niki and Uszkoreit, Jakob and Jones, Llion and Gomez, Aidan N. and Kaiser, Lukasz and Polosukhin, Illia},
  booktitle = {Advances in Neural Information Processing Systems},
  volume    = {30},
  year      = {2017},
}

@book{boyd2004convex,
  title     = {Convex Optimization},
  author    = {Boyd, Stephen and Vandenberghe, Lieven},
  publisher = {Cambridge University Press},
  year      = {2004},
}

@article{finkel1974quadtree,
  title   = {Quad Trees: A Data Structure for Retrieval on Composite Keys},
  author  = {Finkel, Raphael A. and Bentley, Jon Louis},
  journal = {Acta Informatica},
  volume  = {4},
  number  = {1},
  pages   = {1--9},
  year    = {1974},
  doi     = {10.1007/BF00288933},
}

@article{hoeffding1963probability,
  title   = {Probability Inequalities for Sums of Bounded Random Variables},
  author  = {Hoeffding, Wassily},
  journal = {Journal of the American Statistical Association},
  volume  = {58},
  number  = {301},
  pages   = {13--30},
  year    = {1963},
  doi     = {10.1080/01621459.1963.10500830},
}

@article{sason2015reversepinsker,
  title   = {On Reverse Pinsker Inequalities},
  author  = {Sason, Igal},
  journal = {arXiv preprint arXiv:1503.07118},
  year    = {2015},
}

@inproceedings{chen2023diffrate,
  title={DiffRate: Differentiable Compression Rate for Efficient Vision Transformers},
  author={Chen, Mengzhao and Shao, Wenqi and Xu, Peng and Lin, Mingbao and Zhang, Kaipeng and Chao, Fei and Ji, Rongrong and Qiao, Yu and Luo, Ping},
  booktitle={Proceedings of the IEEE/CVF International Conference on Computer Vision},
  year={2023},
}

@inproceedings{shi2024crossget,
  title={CrossGET: Cross-Guided Ensemble of Tokens for Accelerating Vision-Language Transformers},
  author={Shi, Dachuan and Tao, Chaofan and Rao, Anyi and Yang, Zhendong and Yuan, Chun and Wang, Jiaqi},
  booktitle={Proceedings of the 41st International Conference on Machine Learning},
  year={2024},
}
\endgroup

\end{document}